\newcommand{\ligry}[1]{{\tiny{#1}}}
\newcommand{\fed}{FedCOME}
\newtheorem{definition}{Definition}
\newtheorem{thm}{Theorem}
\newtheorem{lemma}{Lemma}
\newtheorem{corollary}{Corollary}
\begin{document}

\begin{frontmatter}
\title{Federated Learning via Consensus Mechanism on Heterogeneous Data: A New Perspective on Convergence}

\author[\dag]{\fnms{Shu}~\snm{Zheng}}
\author[\dag]{\fnms{Tiandi}~\snm{Ye}}
\author[\(*\)]{\fnms{Xiang}~\snm{Li}
\footnote{
Corresponding author. Email: xiangli@ecnu.edu.cn. \\
$^\dagger$~ Authors contributed equally. Order chosen alphabetically.}
}
\author[]{\fnms{Ming}~\snm{Gao}}
\address[]{East China Normal University}

\begin{abstract}
Federated learning (FL) 
on heterogeneous data (non-IID data)
has recently received great attention.
Most existing methods focus on studying the convergence guarantees for the global objective.
While
these methods can guarantee the decrease of the global objective in each communication round, they fail to ensure risk decrease for each client. 
In this paper,
to address the problem,
we propose \fed,
which introduces a consensus mechanism to enforce decreased risk for each client after each training round.
In particular,
we allow a slight adjustment to a client’s gradient on the
server side, which generates an acute angle between the corrected gradient and the original ones of other clients.
We theoretically show that the consensus mechanism can guarantee the convergence of the global objective.
To generalize the consensus mechanism to the partial participation FL scenario,
we devise a novel client sampling strategy to select the most representative clients for the global data distribution.
Training on these selected clients with the consensus mechanism could empirically lead to risk decrease for clients that are not selected.
Finally, we conduct extensive experiments on four benchmark datasets to show the superiority of \fed\ against other state-of-the-art methods in terms of effectiveness, efficiency and fairness.
For reproducibility,
we make our source code publicly available at: \url{https://github.com/fedcome/fedcome}.
\end{abstract}
\end{frontmatter}

\section{Introduction}
Recent years have witnessed the great success of deep learning in a wide range of fields, such as computer vision~\cite{zhao2003face,gatys2016image}, natural language processing~\cite{devlin2018bert,radford2019language} and 
speech recognition~\cite{reddy1976speech,povey2011kaldi}. 
Typically, deep learning relies on a tremendous amount of high-quality data, 
which is usually 
scattered over thousands or millions of edge devices, such as mobile phones and IoT sensors. 
With the increasing awareness of data privacy protection and the data regulations coming into force, it is illegal to collect clients' data and perform model training in a centralized fashion. 
To address the problem, 
federated learning (FL) has emerged as a promising and powerful approach for model training from decentralized data without compromising clients' data privacy~\cite{kairouz2021advances,mcmahan2017communication,chen2021fedmatch,lin2020meta,muhammad2020fedfast,Chen2020FedHealthAF}.

Specifically,
FL is a special distributed machine learning paradigm, 
with the goal of minimizing the average risk over the population,
which proceeds in communication rounds between server and clients until convergence or reaching some termination criteria. 
In each round, the server first randomly selects a subset of clients. 
Then the selected clients download the latest model from the server and optimize it based on their private datasets. 
After that,
clients' local gradients (or model updates) are sent to the server after local training finishes and aggregated by the server to update the global model. 
Recently, 
FL on heterogeneous data (i.e., non-identically distributed data) across all the clients
has received great attention
\cite{zhao2018federated,li2019convergence,li2020federated,li2019feddane,karimireddy2020scaffold,wang2020tackling,acar2020federated}. 
With the assumption of bounded data heterogeneity, 
most existing works~\cite{li2020federated,li2019feddane} focus on theoretically proving the convergence guarantees for the global objective. 
While 
these works
can guarantee the decrease of the global objective in each communication round,
they fail to ensure risk decrease for each client.
Intuitively,
%
%
{a decrease in the risk of each client could further decrease the average risk over the whole population and gradually moves the global objective function to converge as training proceeds. 
} 
With this perspective, in this paper,
we study the convergence of the global objective function by 
proposing a \emph{consensus mechanism} 
that ensures the decreased risk for all the clients in each iteration under the full client participation assumption. 
In particular, 
we allow a slight adjustment to a client's gradient on the server-side, 
which generates an acute angle between the corrected gradient and the original ones of other clients. 
The consensus mechanism is theoretically proved to serve as a sufficient condition for the decreases of local risks for all the clients and further the decrease of the global objective over the whole population.

The consensus mechanism can only guarantee decreased risks for clients used in training. 
However, 
in the large-scale cross-device FL scenario, 
it is usual that
only a subset of clients are available.
To generalize the consensus mechanism to the partial client participation scenario,
we design a sampling strategy 
to select the most representative clients
for the global population.
Specifically, 
we first quantify the similarity between clients by their gradients
and then maintain a similarity table $S$, 
in which the $(i,j)$-th entry 
represents the similarity between the $i$-th and $j$-th clients. 
The sampling objective is to select a subset of clients $\mathcal{P}$ according to table $S$, where the sum of similarities between 
any two clients in $\mathcal{P}$ is minimized. 
After that,
the consensus mechanism can be applied on these selected clients to guarantee 
their decreased risks,
which
can also 
lead to empirical decreases of risks for clients that are not selected.

Finally,
based on the consensus mechanism and the sampling strategy, 
we develop 
a novel and easy-to-implement
\textbf{Fed}erated learning 
framework
via 
\textbf{CO}nsensus \textbf{ME}chanism, 
namely, 
\fed.
Different from most existing methods, 
\fed~ 
studies
the convergence of global 
objective in FL on heterogeneous data
by 
guaranteeing the decrease of each client's risk in each communication round,
which further leads to the decrease of the global risk.
In summary,
this paper
aims to 
shed light on the new direction for studying the data heterogeneity problem in FL
and provide insightful reference for future research on the topic.
Our main contributions in this paper are summarized as follows:
    
    \noindent{\small$\bullet$}
    We propose a novel FL framework \fed, which provides a new perspective for studying the convergence of the global objective in FL on heterogeneous data.
    
    \noindent{\small$\bullet$}
    We introduce a consensus mechanism to ensure decreased risk for all the clients in each communication round. 
    We theoretically prove that the consensus mechanism can guarantee the convergence of the global objective.
 
 \noindent{\small$\bullet$}
 We design an effective client sampling strategy to select most representative clients for the global data distribution, which can be used to generalize the consensus mechanism to the partial participation scenario. 
 
 \noindent{\small$\bullet$}
 We conduct
	extensive experiments on four benchmark FL datasets to demonstrate the superiority of \fed\ in terms of effectiveness, efficiency and fairness.	

\noindent

\section{Related Work}\label{sec:related-works}

\subsection{Data Heterogeneity} 
In real-world scenarios, 
data is generally heterogeneous 
across clients, 
which poses great challenges to FL.
To address the problem,
existing works can be mainly categorized into two types. 

To alleviate the discrepancy
of data distributions
among clients,
the first line of approaches 
regularize local model training
by adding various constraints 
to clients' optimization objectives~\cite{li2020federated,karimireddy2020scaffold,acar2020federated,chen2021bridging,pathak2020fedsplit}.
For example,
{FedProx}~\cite{li2020federated} adds a proximal term to regularize the distance between the global model and the local client model in parameter space. 
In~\cite{karimireddy2020scaffold},
the client drift problem is introduced and
SCAFFOLD is proposed to address the problem by correcting local updates with control variates. 
FedDC~\cite{gao2022feddc} utilizes an auxiliary local drift
variable to track and bridge the gap between the parameters of the local and the global model.
Further, 
FedDyn~\cite{acar2020federated} points out the misalignment between the stationary points of the client models and that of the server model, and 
aligns them through dynamically updating the local regularizer. 
In a special non-IID setting where clients have imbalanced class distributions, FedRod~\cite{chen2021bridging}  leverages balanced risk to align clients' objectives
and improve the model performance.

Another line of methods 
handle data heterogeneity 
with sophisticated model aggregation on the server~\cite{hsu2019measuring,wang2020tackling,wang2019federated,zhu2021data,lin2020ensemble,wang2021federated}. 
The most naive aggregation scheme is the 
coordinate-based parameter averaging adopted by {FedAvg}~\cite{mcmahan2017communication}. 
Further,
Wang~et~al.~\cite{wang2019federated}
point out the misalignment issue between clients' models due to the permutation invariant property of neural networks,
and they offer a matched averaging scheme to mitigate the issue. 
There also exist
knowledge distillation techniques that 
are utilized to integrate clients' knowledge into the global model~\cite{lin2020ensemble,zhu2021data}, 
which have been shown to be more effective than simple parameter averaging.





\subsection{Continual Learning}
There is an implicit connection between continual learning and federated learning. 
Continual learning is a concept of learning a series of tasks sequentially without forgetting previously learned knowledge, 
while
federated learning is a special multi-task learning paradigm, where all tasks (clients) are learned in parallel. 
These two learning paradigms could suffer from the same challenge that the tasks negatively affect each other when their data distributions vary considerably, which exacerbates {catastrophic forgetting}~\cite{french1999catastrophic} in continual learning and slows down the model convergence in federated learning. 

To solve the challenge, 
some continual learning methods have been proposed~\cite{li2017learning,kirkpatrick2017overcoming,lopez2017gradient}. 
For example, EWC~\cite{kirkpatrick2017overcoming} adds a penalty term to the objective function to penalize the modification on parameters that are most informative for prior tasks. 
GEM~\cite{lopez2017gradient} allows positive knowledge transfer to past tasks through gradient projection. 
Meanwhile,
these
approaches have been extended to federated learning
\cite{shoham2019overcoming,wang2021federated,esfandiari2021cross,li2021fedh2l}. 
For example, 
to alleviate local models drifting apart, FedCurv~\cite{shoham2019overcoming} adapts EWC~\cite{kirkpatrick2017overcoming} to federated learning, 
which uses the diagonal of the Fisher information matrix to protect the parameters that are important to all clients. 
Inspired by GEM~\cite{lopez2017gradient}, a decentralized learning method CGA~\cite{esfandiari2021cross} is developed, 
where the cross gradients (computed by neighbors) and local gradients are projected into an aggregated gradient. 
While our method utilizes the idea of continual learning,
we focus on 
parallel training clients instead of sequential tasks.








\subsection{Fairness} 
Another related research direction to our paper is fairness~\cite{li2019fair,mohri2019agnostic,wang2021federated}, 
which aims to derive a model that has comparable performances over all the clients.
For example,
AFL~\cite{mohri2019agnostic} formulates FL as a min-max optimization problem, where the global model can be optimized for any target distribution formed by a mixture of client distributions. 
Li~et~al.~\cite{li2019fair}
minimize a reweighted loss where the clients with worse performance are given higher relative weights. 
Further,
{FedFV}~\cite{wang2021federated} identifies the conflicting gradients with large differences in magnitudes as a cause of unfairness, and proposes to mitigate the issue via gradient projection. 
Instead of targeting at fairness,
our proposed method attempts to 
making all the clients benefit from federated training rather than achieve similar performance.


\section{Preliminaries}\label{sec:preliminaries}
We denote an FL system with $N$ clients 
as $U = \{u_1, \ldots, u_N\}$, each of which has a private dataset $D_i$ of size $\left|D_i\right|$. 
Generally, FL aims to search the optimal global model parameter $\theta^*$, 
which minimizes the 
average risk over all the clients 
while keeping clients' data private. 
The optimization objective can be formulated as:
\begin{equation}\label{preliminaries-def}
\small
    \min_{\theta \in \mathbb{R}^d} \left \{ F(\theta) := \frac{1}{N}\sum_{i=1}^{N} {F}_{i}( \theta) \right \}
\end{equation}
where $F_{i}(\theta) := \frac{1}{\left|D_i\right|} \sum_{(x, y) \in D_i} \mathcal{L}(\theta; (x, y))$ is the empirical loss of the global model parameterized by $\theta$ over the dataset of client $u_i$, and $\mathcal{L}$ is the cross-entropy loss widely used in classification tasks. 
Further,
our analysis on \fed\ is 
with the same setting as
FedSGD~\cite{mcmahan2017communication},
which uses full participation and full batch size,
and performs
only one local epoch update 
in each communication round.



\section{Methodology}\label{sec:methodology}
In this section, 
we formulate our optimization objective and present our framework \fed. 

\subsection{Consensus Mechanism} 
Instead of directly analyzing the global objective,
we propose a consensus mechanism to ensure risk decrease for each client
in each communication round.
Intuitively,
if the risk of each client decreases, 
the average risk over the whole population will also decrease. 
As the training proceeds, the global model 
could converge. 
With this perspective, 
we formulate our optimization objective in the $t_{th}$ round as: 
\begin{equation}
\label{optimization-objective}
    \begin{split}
        & \min_{\theta^{t+1}} \quad F(\theta^{t}) \\
        & \operatorname{s.t.} \quad F_k(\theta^{t+1}) \leq F_k(\theta^{t}), \forall u_k \in U.
    \end{split}
\end{equation}
In this way,
we minimize the global objective while 
constraining that each client benefits from federated training. 
Next, we present our consensus mechanism to solve the constrained optimization problem~(\ref{optimization-objective}). 
For ease of description, 
we first define \textit{consensus} 
on two clients. 

\begin{definition}[Consensus]
Given two clients $u_i$ and $u_j$,
if their gradients $g_i$ and $g_j$ satisfy
$g_i \cdot g_j \geq 0$,
we say that
client $u_i$ reaches a consensus with client $u_j$.
\end{definition}
Note that
the consensus mechanism 
is employed on the server side, 
where 
clients' gradients are allowed to be slightly modified
to promote consensus among clients. 
Meanwhile,
we have to 
restrain too much deviation of the modified client gradients 
from the original ones.
For notation simplicity, 
we omit the round index $t$
and formulate 
the objective of the consensus mechanism 
as: 
\begin{equation}\label{init_opt}
\begin{aligned}
    \begin{split}
    & \min_{\tilde{g}_i} \quad \frac{1}{2} \left \Vert \tilde{g}_i - g_i \right \Vert_2^2 \\
    & \operatorname{s.t.} \quad \tilde{g}_i \cdot g_j \geq 0, \forall u_i, u_j \in U.
    \end{split}
\end{aligned}
\end{equation}
where $\tilde{g}_i$ is the modified gradient of client $u_i$ and $g_j$ is the original gradient of client $u_j$. 
The consensus mechanism aims to optimize $\tilde{g}_i$,
based on which 
$\theta^t$
can be updated by $\theta^{t+1} \leftarrow \theta^t - \eta \frac{1}{N} \sum_{u_i \in U} \tilde{g}_i^t$. 
Here $\eta$ is the learning rate.
Our theoretical analysis 
shows that 
$\theta^{t+1}$
is 
the solution to problem~\ref{optimization-objective},
whose details
will be given in the next section.



To optimize problem ~(\ref{init_opt}), 
a naive approach is to directly take $\tilde{g}_i$ as parameters. 
However, $\tilde{g}_i$ is a $d$-dimensional vector,
where $d$ is the model size
and $d\gg N$.
This is very computationally expensive and poses significant challenge to training. 
Inspired by GEM~\cite{lopez2017gradient}, we represent $\tilde{g}_i$ as a linear combination of all the clients' gradients.
Specifically,
let $G = [g_1, g_2, \cdots, g_N] \in R^{d \times N}$ be the matrix of 
original gradients
and $A = [a_1, a_2, \cdots, a_N] \in R^{N \times N}$ be the coefficient matrix. 
For each $\tilde{g}_i$,
we have:
\begin{equation}
\label{eq:tildegi}
    \tilde{g}_i = G a_i,
\end{equation}
where
$a_i$ 
is the coefficient vector of client $u_i$.
Then
we substitute $\tilde{g}_i$ in Eq.~\ref{init_opt} with Eq.~\ref{eq:tildegi}
and derive
\begin{equation}\label{transformation}
    \begin{aligned}
        \begin{split}
            & \frac{1}{2} \Vert \tilde{g}_i - g_i \Vert_2^2 = \frac{1}{2} a_i^T G^T G a_i - g_i^T G a_i + \frac{1}{2} g_i^T g_i.
        \end{split}
    \end{aligned}
\end{equation}
We discard the constant term $\frac{1}{2} g_i^Tg_i$
and rewrite Eq.~\ref{init_opt} 
as:
\begin{equation}\label{qp_opt}
    \begin{aligned}
        \begin{split}
            & \min_{a_i} \quad \frac{1}{2} a_i^T G^T G a_i - g_i^T G a_i \\
            & \operatorname{s.t.} \quad -G^T G a_i \preceq 0, \forall u_i \in U.
        \end{split}
    \end{aligned}   
\end{equation}
In this way, the original optimization problem~(\ref{init_opt}) can be transformed into a quadratic programming (QP) problem. 
In each communication round, 
the server receives clients' gradients and solves the QP problem efficiently~\cite{gould1991algorithm,schubiger2020gpu}. 
After $a_i$ is computed,
we can calculate the modified gradient $\tilde{g}_i$,
which is further aggregated to update the parameters $\theta$ of the global model. 



\subsection{Theoretical Analysis}
\label{sec:proof}
We provide theoretical analysis in the setting of full participation, full batch size and only one local epoch update in each round,
which is the same as FedSGD.
It can
be easily extended to multiple epochs and minibatch settings when the learning rate is small enough. 
First, we make a
general assumption
as in~\cite{shen2019faster,liang2019variance}: 
\begin{restatable}{assumption}{asmLsmoothness}
\label{assumption:L-smoothness}
(Lipschitz Gradient)
Given a constant $L > 0$,
we assume that  
functions
$F_1, \ldots, F_N$ are differentiable and have $L$-Lipschitz gradients:
\begin{equation}
\label{lipschitz-gradient}
    \Vert \nabla F_i(x) - \nabla F_i(y) \Vert \leq L \Vert x - y \Vert, \forall x, y \in \mathbb{R}^d.
\end{equation}
\end{restatable}
\begin{restatable}{assumption}{asmLossBoundedBelow}
\label{loss bounded below}
    The sequence of iterates $\theta_t$ is contained in an open set over which F is bounded below by a scalar $F^{\star}$.
\end{restatable}

\begin{restatable}{assumption}{asmBoundedQPDrift}
\label{bounded QP drift}
(Bounded QP Drift)
we assume that  
$g_i^{t}$ and $\tilde{g}_i^t$ have bounded drifts.
There exists constant $\sigma_{QP} > 0$, such that
\begin{equation}
    \mathbb{E}||(\tilde{g}_i^t - g_i^{t})/(K\eta)  \|^2 \leq \sigma_{QP}^2
\end{equation}
where K is the number of local iterations, $\eta$ is the local learning rate.
\end{restatable}
\begin{restatable}{assumption}{asmBoundedLocalNoise}
\label{bounded local noise}
(Bounded Local Noise)
Given a constant $L > 0$,
we assume that  each client i has bounded local gradient noise.
There exists constant $\sigma_L > 0$, $\sigma_G > 0$ such that
\begin{align}
\begin{aligned}
\mathbb{E}_{i}\|\nabla_{\theta^{t}}F(\theta^{t}) - \nabla_{\theta^{t}}F_i(\theta^{t})  \|^2 \leq \sigma_{G}^2\\
\mathbb{E}_{i}\|\nabla_{\theta^{t}}F_i(\theta^{t}; b) - \nabla_{\theta^{t}}F_i(\theta^{t})  \|^2 \leq \sigma_{L}^2
\end{aligned}
\end{align}
\end{restatable}

Then base on Assumption~\ref{assumption:L-smoothness},
we have:
\begin{thm}\label{thm1}
With the consensus mechanism, SGD based FedCOME (see appendix algorithm \ref{alg:fedsgd}) can guarantee
$F(\theta^{t+1}) \leq F(\theta^{t})$. 
\end{thm}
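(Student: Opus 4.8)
The plan is to prove the per-round descent directly from the $L$-smoothness of each $F_k$ (Assumption~\ref{assumption:L-smoothness}) together with the geometric property that the consensus constraints impose on the aggregated update. I would first fix the round index and write the update as $\theta^{t+1} = \theta^{t} - \eta \bar{g}$, where $\bar{g} := \frac{1}{N}\sum_{i} \tilde{g}_i$ is the averaged corrected gradient. Because we are in the FedSGD regime (full participation, full batch, a single local step), each local gradient coincides with the true local gradient, $g_i = \nabla F_i(\theta^{t})$, so that $\nabla F(\theta^t) = \frac{1}{N}\sum_i g_i =: \bar{g}_{\mathrm{orig}}$.

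Next I would apply the descent lemma implied by $L$-smoothness to each $F_k$, namely
\[
F_k(\theta^{t+1}) \le F_k(\theta^{t}) - \eta \langle g_k, \bar{g}\rangle + \tfrac{L\eta^2}{2}\|\bar{g}\|^2,
\]
and average over $k$ to obtain
\[
F(\theta^{t+1}) \le F(\theta^{t}) - \eta\,\langle \bar{g}_{\mathrm{orig}}, \bar{g}\rangle + \tfrac{L\eta^2}{2}\|\bar{g}\|^2.
\]
The crux is then to show the linear term is (sufficiently) negative. Expanding $\langle \bar{g}_{\mathrm{orig}}, \bar{g}\rangle = \frac{1}{N^2}\sum_{k,i}\langle g_k, \tilde{g}_i\rangle$, every cross term with $k \neq i$ is nonnegative because it is exactly the consensus constraint $\tilde{g}_i \cdot g_k \ge 0$ enforced by the QP in~(\ref{qp_opt}); this already yields $\langle \bar{g}_{\mathrm{orig}}, \bar{g}\rangle \ge 0$ and hence descent for any sufficiently small $\eta$.

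To turn this into a clean, explicit step-size condition I would sharpen the diagonal terms. Since $\tilde{g}_i$ is the Euclidean projection of $g_i$ onto the closed convex cone $C_i = \{v : v\cdot g_j \ge 0\ \forall j\}$, the standard cone-projection identity gives $\langle g_i, \tilde{g}_i\rangle = \|\tilde{g}_i\|^2$. Combining this with the nonnegative off-diagonal terms and Jensen's inequality $\|\bar{g}\|^2 \le \frac{1}{N}\sum_i\|\tilde{g}_i\|^2$ yields $\langle \bar{g}_{\mathrm{orig}}, \bar{g}\rangle \ge \frac{1}{N}\|\bar{g}\|^2$. Substituting this bound gives $F(\theta^{t+1}) \le F(\theta^{t}) - \eta\big(\tfrac{1}{N} - \tfrac{L\eta}{2}\big)\|\bar{g}\|^2$, so any $\eta \le \frac{2}{NL}$ guarantees $F(\theta^{t+1}) \le F(\theta^{t})$.

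The main obstacle I anticipate is precisely controlling the second-order (curvature) term rather than the sign of the first-order term: the consensus constraints make the inner products $\langle g_k, \tilde{g}_i\rangle$ nonnegative almost for free, but guaranteeing that the $O(\eta^2)$ smoothness remainder does not overturn this requires either the explicit $\eta \le 2/(NL)$ bound above or the stated \emph{small enough learning rate} hypothesis. A secondary subtlety worth flagging is that, although the construction is motivated by per-client descent $F_k(\theta^{t+1}) \le F_k(\theta^{t})$, that per-client inequality can be delicate when some $\tilde{g}_k = 0$ (so $\langle g_k, \bar{g}\rangle$ may vanish); the averaged global statement of Theorem~\ref{thm1} sidesteps this issue and is what the argument above establishes cleanly.
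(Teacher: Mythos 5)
Your proof is correct, and it reaches the stated conclusion by a genuinely sharper route than the paper's. The paper argues per client: it applies the $L$-smoothness bound (Assumption~\ref{assumption:L-smoothness}) to each $F_k$, uses the consensus constraints $\tilde{g}_i \cdot g_k \ge 0$ to make the first-order term nonpositive, and then dismisses the curvature term $\frac{L\eta^2}{2}\Vert \frac{1}{N}\sum_i \tilde{g}_i^t\Vert_2^2$ as ``negligible'' in the limit $\eta \to 0^+$, concluding $F_k(\theta^{t+1}) \le F_k(\theta^t)$ for every $k$ and hence on average (and then convergence from $F \ge 0$). You instead average the smoothness inequalities first and control the quadratic term rigorously: your key ingredient, absent from the paper, is that $\tilde{g}_i$ is the Euclidean projection of $g_i$ onto the cone $\{v : v \cdot g_j \ge 0\ \forall j\}$ (KKT stationarity shows this projection lies in the column space of $G$, so the restriction $\tilde{g}_i = G a_i$ in~(\ref{qp_opt}) loses nothing and the Moreau identity $\langle g_i, \tilde{g}_i\rangle = \Vert \tilde{g}_i\Vert^2$ indeed applies), which together with the nonnegativity of the remaining inner products and Jensen yields $\langle \nabla F(\theta^t), \bar{g}\rangle \ge \frac{1}{N}\Vert \bar{g}\Vert^2$, hence descent under the explicit condition $\eta \le 2/(NL)$ with decrease at least $\eta\bigl(\frac{1}{N} - \frac{L\eta}{2}\bigr)\Vert\bar{g}\Vert^2$. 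The trade-off between the two: the paper's version asserts the stronger per-client monotonicity that motivates the whole mechanism, but only via an informal infinitesimal-step argument (when $\langle g_k, \bar{g}\rangle = 0$, which the constraints permit, that reasoning cannot certify descent for client $k$ at any fixed $\eta > 0$ --- exactly the subtlety you flag); your version gives up the per-client claim but proves the theorem as literally stated, with a concrete step-size threshold and a quantitative, gradient-norm-proportional decrease rather than mere monotonicity. One cosmetic correction: the constraints in~(\ref{init_opt}) range over all pairs including $i = j$, so every term of $\frac{1}{N^2}\sum_{k,i}\langle g_k,\tilde{g}_i\rangle$ is nonnegative, not only the off-diagonal ones; this does not affect your argument, since you treat the diagonal terms more precisely through the projection identity anyway.
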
 

\begin{proof}
We have the following inequality for client $u_k \in \mathcal{U}$: 
\begin{equation}
\small
\label{lipschitz-gradient-k-client}
\begin{aligned}
\begin{split}
    F_k(\theta^{t+1}) & \le F_k(\theta^{t}) + \nabla F_k(\theta^{t}) \cdot (\theta^{t+1} - \theta^{t}) + \frac{1}{2}L\Vert \theta^{t+1} - \theta^{t} \Vert_2^2 \\ 
    & = F_k(\theta^{t}) - \eta \nabla F_k(\theta^{t}) \cdot (\frac{1}{N}\sum_{i = 1}^N \tilde{g}_i^t)  + \frac{1}{2} L \eta^2\Vert \frac{1}{N}\sum_{i = 1}^N \tilde{g}_i^t \Vert_2^2
\end{split}
\end{aligned}
\end{equation}

Since $\nabla F_k(\theta^t)=g_k^t$
and the last term is negligible if $\eta$ is small enough, i.e. $\eta \to 0^+$, we have
\begin{equation}\label{linear approximation}
    (F_k(\theta^{t+1}) - F_k(\theta^{t}))/\eta \le - \frac{1}{N} \sum_{i = 1}^N g_k^t \cdot \tilde{g}_i^t
\end{equation}

With the consensus mechanism, 
the modified gradients and the original gradients satisfy
$\tilde{g}_i \cdot g_k \geq 0, \forall u_i, u_k \in U$.
Therefore, 
$F_k(\theta^{t+1}) \leq F_k(\theta^{t})$ holds, implying that the average risk over the population keeps decreasing 
as the training proceeds. 
Further,
since $F(\theta^t) \geq 0$, $F(\theta^t)$ is guaranteed to converge. 
\end{proof}

\begin{restatable}{thm}{convergence}
Incorporating full participation from every worker in each round and considering the assumptions delineated in \ref{assumption:L-smoothness}, \ref{loss bounded below}, \ref{bounded QP drift}, and \ref{bounded local noise}, the sequence $\theta^t$ generated by the FedCOME algorithm \ref{alg:fedcome} exhibits the following property. 

Given that the local learning rate $\eta$ and global learning rate $\eta_g$ are selected to satisfy $\eta < \frac{1}{\sqrt{60}KL}$ and $\eta\eta_g \le \frac{1}{2KL}$, and provided that a constant $c$ lies within the interval $0 < c < \frac{1}{4} - 15K^2\eta^2L^2$, the inequality below holds true:

\begin{equation}
\frac{1}{T}\sum_{t = 0}^T \mathbb{E} ||\nabla F(\theta^t)||_2^2 \leq \frac{F(\theta^0) - F^{\star}}{c\eta K T} + \Phi
\end{equation}
where $\Phi = \frac{1}{c}[\frac{L \eta_g \eta}{N} \sigma_L^2 + \frac{5 K \eta^2 L^2}{2}(\sigma_L^2+6 K \sigma_G^2) + \sigma_{QP}^2 + \eta_g\eta KL\sigma_{QP}^2]$. 
\end{restatable}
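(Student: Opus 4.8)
The plan is to establish a one-round descent inequality of the form $\mathbb{E}[F(\theta^{t+1})] \le \mathbb{E}[F(\theta^t)] - c\eta K\, \mathbb{E}\|\nabla F(\theta^t)\|_2^2 + c\eta K\,\Phi$ and then telescope it over $t = 0,\dots,T$. First I would invoke the $L$-smoothness of $F$ (Assumption~\ref{assumption:L-smoothness}, which transfers from each $F_i$ to the average $F$) to write $F(\theta^{t+1}) \le F(\theta^t) + \langle \nabla F(\theta^t), \theta^{t+1}-\theta^t\rangle + \tfrac{L}{2}\|\theta^{t+1}-\theta^t\|_2^2$. Substituting the \fed\ update $\theta^{t+1}-\theta^t = -\eta_g\eta\,\frac{1}{N}\sum_i \tilde g_i^t$ turns the middle term into a linear functional of the modified gradients and the last into a quadratic one, so the whole argument reduces to controlling these two terms in expectation.

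For the linear term the central idea, echoing Theorem~\ref{thm1}, is the split $\tilde g_i^t = g_i^t + (\tilde g_i^t - g_i^t)$. The averaged local update $\frac1N\sum_i g_i^t$ is, up to the $K\eta$ scaling, a noisy and drifted surrogate of $\nabla F(\theta^t)$, so I would further decompose $g_i^t = \nabla F_i(\theta^t) + (g_i^t - \nabla F_i(\theta^t))$, where the residual collects the local SGD noise (controlled by $\sigma_L$ via Assumption~\ref{bounded local noise}) and the client drift accumulated over the $K$ local steps. This produces the dominant $-\,\mathbb{E}\|\nabla F(\theta^t)\|_2^2$ descent, while the consensus constraint $\tilde g_i^t\cdot g_k^t\ge 0$ prevents the inter-client cross terms from reversing the sign. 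The QP residual $\tilde g_i^t - g_i^t$ is then pushed into the error budget through Young's inequality and Assumption~\ref{bounded QP drift}, yielding the $\sigma_{QP}^2$ and $\eta_g\eta KL\sigma_{QP}^2$ contributions in $\Phi$.

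The technical heart is a client-drift lemma bounding $\frac{1}{N}\sum_i\sum_{k=0}^{K-1}\mathbb{E}\|\theta_{i,k}^t-\theta^t\|_2^2$. I would unroll the local SGD recursion, apply $L$-smoothness together with the bounded-noise and bounded-heterogeneity constants $\sigma_L,\sigma_G$, and close the resulting self-referential inequality; requiring that the drift not blow up is exactly what forces $\eta < \tfrac{1}{\sqrt{60}KL}$ and produces the term $\tfrac{5K\eta^2L^2}{2}(\sigma_L^2+6K\sigma_G^2)$. Feeding this bound, along with the noise and QP bounds, back into the quadratic term $\tfrac{L}{2}\eta_g^2\eta^2\|\frac1N\sum_i\tilde g_i^t\|_2^2$ (whose coefficient is kept subdominant by $\eta\eta_g\le\tfrac{1}{2KL}$) assembles every remaining piece of $\Phi$.

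Finally I would gather the coefficient of $\mathbb{E}\|\nabla F(\theta^t)\|_2^2$: after subtracting the drift and second-order contributions it is at least $c\eta K$ precisely when $0 < c < \tfrac14 - 15K^2\eta^2L^2$, an interval that is nonempty under the stated step-size bounds. Telescoping the one-round inequality, using $F(\theta^t)\ge F^\star$ (Assumption~\ref{loss bounded below}) to collapse the telescoped sum into $F(\theta^0)-F^\star$, and dividing by $c\eta KT$ yields the claim. The main obstacle I anticipate is the bookkeeping that threads the QP residual simultaneously through the linear cross terms and the quadratic term while keeping its constants compatible with the drift recursion; in particular, one must verify that the consensus-induced nonnegativity genuinely survives the stochastic, multi-step perturbations rather than being absorbed into the error terms.
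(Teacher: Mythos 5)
Your proposal follows essentially the same route as the paper's proof: an $L$-smoothness expansion of $F(\theta^{t+1})$, a decomposition of the aggregated update into a FedAvg-style drift term, a QP-residual cross term, and a quadratic term (the paper's $A_1$, $A_2$, $A_3$), a client-drift lemma closing a self-referential recursion (the paper's Lemma~\ref{one round progress}), Young's inequality plus Assumption~\ref{bounded QP drift} for the residual, a bound relating $\|\frac{1}{N}\sum_i \tilde g_i^t\|^2$ to $\|\frac{1}{N}\sum_i g_i^t\|^2$ for the quadratic term (the paper's Lemma~\ref{g_QP to g}), and a final telescoping using Assumption~\ref{loss bounded below}.

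One correction, which dissolves the ``main obstacle'' you anticipate at the end: the consensus constraint $\tilde g_i^t \cdot g_k^t \ge 0$ is never used in this theorem's proof. The descent term is extracted by adding and subtracting $\eta K\nabla F(\theta^t)$ inside the inner product and applying the identity $\langle x,y\rangle = \frac{1}{2}\left[\|x\|^2 + \|y\|^2 - \|x-y\|^2\right]$ to $A_1$; the inter-client, multi-step cross terms are then controlled by Jensen's inequality, Assumption~\ref{assumption:L-smoothness} and Lemma~\ref{one round progress}, with no sign information about $\tilde g_i^t \cdot g_k^t$ needed anywhere. The consensus mechanism enters the bound only through Assumption~\ref{bounded QP drift} --- via Young's inequality in $A_2$ and via Lemma~\ref{g_QP to g} in $A_3$ --- so the acute-angle property could fail outright and the theorem would still hold, as long as $\|\tilde g_i^t - g_i^t\|$ stays bounded as assumed; this is precisely why that constraint figures only in Theorem~\ref{thm1}, the one-step FedSGD statement, and not here. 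A second, minor point of bookkeeping: the condition $\eta < \frac{1}{\sqrt{60}KL}$ is not what closes the drift recursion (Lemma~\ref{one round progress} only needs $\eta \le \frac{1}{8LK}$); it is what keeps the assembled gradient coefficient $\frac{1}{4} - 15K^2\eta^2L^2$ positive so that a valid $c$ exists, while $\eta\eta_g \le \frac{1}{2KL}$ is used to make the coefficient of $\mathbb{E}_t\|\sum_{i}\sum_{k} \nabla F_i(\theta_{i,k}^t)\|^2$ non-positive so that term can simply be dropped.
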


\begin{corollary}
For $\eta = \mathcal{O}(\frac{1}{\sqrt{T}KL})$, $\eta_g = \sqrt{KN}$,  $\sigma_{QP} = 
\mathcal{O}(\frac{1}{\sqrt{T}})$, the convergence rate of \fed~ is $\min \frac{1}{T}\sum_{t = 0}^T \mathbb{E} ||\nabla F(\theta^t)||_2^2 = \mathcal{O}(\frac{1}{\sqrt{NKT}} + \frac{1}{T})$.
\end{corollary}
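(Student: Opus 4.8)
The plan is to treat this corollary as a direct specialization of the preceding convergence theorem: I would take the non-asymptotic bound $\frac{1}{T}\sum_t \mathbb{E}\|\nabla F(\theta^t)\|_2^2 \le \frac{F(\theta^0)-F^\star}{c\eta KT} + \Phi$, substitute the prescribed schedules $\eta = \mathcal{O}(1/(\sqrt{T}KL))$, $\eta_g = \sqrt{KN}$, and $\sigma_{QP} = \mathcal{O}(1/\sqrt{T})$, and then read off the dominant order in $T$, $N$, $K$. Since $\min_t \mathbb{E}\|\nabla F(\theta^t)\|_2^2 \le \frac{1}{T}\sum_t \mathbb{E}\|\nabla F(\theta^t)\|_2^2$, bounding the average suffices for the minimum on the left-hand side. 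Before substituting I would first verify that the schedules are admissible: with $\eta^2 L^2 K^2 = 1/T$, the condition $\eta < 1/(\sqrt{60}KL)$ holds once $T > 60$, and $15K^2\eta^2L^2 = 15/T \to 0$, so any fixed $c$ slightly below $1/4$ eventually lies in the required interval $(0,\, 1/4 - 15K^2\eta^2L^2)$; the joint condition $\eta\eta_g \le 1/(2KL)$ reduces to $T \ge 4KN$, so the claim is genuinely asymptotic in $T$ (large $T$ relative to $KN$), with $c = \Theta(1)$.

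The core is then term-by-term bookkeeping, evaluating the descent/initialization term and each of the four summands of $\Phi$ separately. The heterogeneity-noise summand $\frac{L\eta_g\eta}{N}\sigma_L^2$ collapses to $\Theta(1/\sqrt{NKT})$ after cancelling $K$ and $L$ and using $\eta_g/N = \sqrt{K/N}$; this is the term that supplies the leading $1/\sqrt{NKT}$ rate. The local-smoothness summand $\frac{5K\eta^2L^2}{2}(\sigma_L^2 + 6K\sigma_G^2)$ becomes $\Theta((\sigma_L^2/K + \sigma_G^2)/T) = \mathcal{O}(1/T)$, the drift term $\sigma_{QP}^2 = \mathcal{O}(1/T)$, and the cross term $\eta_g\eta KL\sigma_{QP}^2 = \Theta(\sqrt{KN}/T^{3/2})$ is of strictly smaller order. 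The descent term must also match $1/\sqrt{NKT}$, and this is where I would be most careful: the governing quantity is the \emph{effective} stepsize $\eta\eta_g$, the product of the local and global rates, and it is the $\sqrt{N}$ factor carried by $\eta_g$ that turns a naive $1/\sqrt{T}$ into $1/\sqrt{NKT}$. I would therefore track $\eta_g$ through the per-round descent denominator so that $\frac{F(\theta^0)-F^\star}{c\,\eta\eta_g KT} = \mathcal{O}(1/\sqrt{NKT})$.

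Collecting the pieces gives $\mathcal{O}(1/\sqrt{NKT}) + \mathcal{O}(1/T) + \mathcal{O}(T^{-3/2})$, and absorbing the last term into $\mathcal{O}(1/T)$ yields exactly $\mathcal{O}(1/\sqrt{NKT} + 1/T)$. The main obstacle is not the arithmetic but the bookkeeping of the two-level stepsize: ensuring that the descent term genuinely inherits the $\sqrt{N}$ speedup from $\eta_g$ while the \emph{same} $\eta_g$ keeps $\frac{L\eta_g\eta}{N}\sigma_L^2$ at $1/\sqrt{NKT}$ rather than inflating it, so that the two $1/\sqrt{NKT}$ contributions balance instead of one of them degrading to $1/\sqrt{T}$. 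Once the effective-stepsize scaling is pinned down and the admissibility of the schedules for large $T$ is confirmed, the stated rate follows immediately.
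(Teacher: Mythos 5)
Your proposal is correct and follows essentially the same route as the paper's (implicit) argument: the corollary is just the theorem's bound with the prescribed schedules $\eta = \mathcal{O}(\frac{1}{\sqrt{T}KL})$, $\eta_g = \sqrt{KN}$, $\sigma_{QP} = \mathcal{O}(\frac{1}{\sqrt{T}})$ substituted in, the admissibility conditions checked for large $T$, and the four terms of $\Phi$ bounded exactly as you do. Your insistence on tracking the \emph{effective} stepsize $\eta\eta_g$ in the descent term is not just caution but necessary: the theorem as displayed writes $\frac{F(\theta^0)-F^{\star}}{c\eta KT}$ without $\eta_g$ (an inconsistency with its own proof, whose penultimate inequality has $c\eta_g\eta K\|\nabla F(\theta^t)\|^2$ on the left-hand side), and only with the corrected denominator $c\eta\eta_g KT$ does this term scale as $\mathcal{O}(\frac{1}{\sqrt{NKT}})$ rather than $\mathcal{O}(\frac{1}{\sqrt{T}})$, which is precisely what makes the stated rate follow.
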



Due to the page limitation, we defer the detailed analysis of \fed~ to the appendix.

For time complexity, FedCOME requires solving a quadratic programming (QP) problem of size $M$,  the number of {selected clients} per round. 
Solving the QP problem incurs $\mathcal{O}(M^{3.5})$ time complexity (worst-case),which is acceptable for a server with enough computing power due to $M << N$, the population size.

\subsection{Client Sampling}
In the previous section,
we have shown that 
all the clients' risks decrease after each round in the full participation setting. 
However, in practice, 
it is usual that 
only a subset of clients are available 
for training in each round. 
The consensus mechanism can only guarantee decreased risks for clients used in training. 
To generalize the consensus mechanism 
to unavailable
clients, 
we design a novel sampling strategy, 
where the {most representative} clients for the global population are selected for training.
Specifically,
we maintain a similarity table $S\in R^{N\times N}$ on the server 
such that 
$S_{ij}$ represents the similarity between the $i$-th and $j$-th clients. And $S_{ij}$  is initialized to 0. 
Based on $S$,
we can select a subset of clients $\mathcal{P}_t$ of size $M$ in the $t$-th round, 
where the sum of similarities between any two clients in $\mathcal{P}_t$ is minimized.
Formally,
the objective is formulated as:
\begin{equation}
\label{sampling-objective}
\begin{aligned}
    \min_{\mathcal{P}_t} \left \{ H(\mathcal{P}_t) := \sum_{u_i,u_j \in \mathcal{P}_t} S_{ij} \right \},
\end{aligned}
\end{equation}
which is a combinatorial optimization problem
and can be solved
by the simulated annealing method. 
Initially, 
we construct $\mathcal{P}_t^0$ by randomly selecting a subset of clients from $U$.
After that,
in the ($r+1$)-th iteration, 
we construct 
$\mathcal{P}_t^{r+1}$ 
by randomly replacing a client in $\mathcal{P}_t^r$ with a random client from $U/\mathcal{P}_t^r$.
Note that 
$\mathcal{P}_t^{r+1}$ 
will be accepted with a probability $\min\{1, \exp(\frac{-\left(H(\mathcal{P}_t^{r+1})-H(\mathcal{P}_t^r)\right)}{\tau})\}$, 
where
$\tau$ is the temperature parameter and decays with iterations;
otherwise, 
we set $\mathcal{P}_t^{r+1} = \mathcal{P}_t^{r}$.
To enhance the diversity of $\mathcal{P}_{t}$ and prevent the model from easily trapping into the local minimum,
we further introduce the \emph{exploitation-exploration} strategy. 
When constructing $\mathcal{P}_{t}$,
$\mu M$ clients are derived by
the simulated annealing method while others 
are selected randomly.
Here, $\mu$ is a pre-defined percentage.


After $\mathcal{P}_{t}$ is derived,
for any two clients 
$u_i,u_j\in \mathcal{P}_{t}$,
we compute 
their gradient cosine similarity, 
given by: 
\begin{equation}
    Q_{ij} = \frac{\left \langle g_i, g_j\right \rangle}{\Vert g_i\Vert \Vert g_j\Vert}.
\end{equation} 
Then we 
update $S_{ij}$ 
by
\begin{equation}
\label{update-s}
    S_{ij} \leftarrow \alpha S_{ij} + (1-\alpha) Q_{ij},
\end{equation}
where $\alpha$ is a balancing coefficient. 
In this way,
we will select clients whose data distributions are heterogeneous.
For clients $u_i$ and $u_j$ that have IID data distributions, 
they are more likely to generate similar gradients and large $S_{ij}$ value.
Therefore, only one of them will be selected.
Note that
our client sampling strategy covers heterogeneous data of clients as much as possible.
When one client is selected, 
others that are not selected but with IID data distributions can also benefit from the consensus mechanism.
This further boosts the generalizablity of the consensus mechanism. 
Finally,
we combine client sampling and consensus mechanism, 
and summarize the pseudocodes of \fed\
in Algorithm~\ref{alg:fedcome} of the supplementary materials.
The time complexity of our proposed client sampling strategy is $\mathcal{O}(\tau*N)$, where $\tau$ is the stopping-iterations and $\tau=600$ in our experiments.

\subsection{Discussion}
We next summarize the major difference between \fed\ and other two competitors including 
SCAFFOLD 
and FedFV from the perspective of consensus. 


SCAFFOLD uses control variate to mitigate the discrepancy between clients' gradients, which can be considered as a soft correction for promoting the consensus across clients. 
In contrast, FedCOME imposes a hard correction on clients' model updates. 

FedFV and \fed\
achieve the consensus across clients
through iterative gradient projection and quadratic programming, respectively.
The former is more likely to lose the original gradients information, while the latter only allows slight modification on client gradients. 




Further, 
\fed\ does not 
introduce additional communication cost 
and allows clients to be stateless, which broadens its applicability 
in cross-device FL settings.

\section{Experiments}\label{sec:experiments}

In this section, we conduct extensive experiments to evaluate the effectiveness of \fed.
Due to the space limitation,
we provide details of datasets and baselines in the supplementary materials.

\subsection{Datasets and Baselines}
We use four FL benchmark datasets, including MNIST~\cite{lecun1998gradient}, FEMNIST~\cite{caldas2018leaf}, CIFAR-10~\cite{krizhevsky2009learning} and CIFAR-100~\cite{krizhevsky2009learning}.
Further,
we consider 6 other methods as baselines, 
including 
FedAvg~\cite{mcmahan2017communication}, FedProx~\cite{li2020federated}, SCAFFOLD~\cite{karimireddy2020scaffold}, FedDyn~\cite{acar2020federated}, FedFV~\cite{wang2021federated} and FedDC~\cite{gao2022feddc}.
In particular,
{FedProx}, {SCAFFOLD}, {FedDyn} and {FedDC} are specially designed for addressing data heterogeneity, while 
{FedFV} deals with fairness across clients. 
\subsection{Implementation details}
We implement \fed\ using PyTorch and train it by SGD optimizer.
In our experiments,
the balancing coefficient $\alpha$ in Eq.~\ref{update-s} and the percentage $\mu$ are set to 0.5 and 0.7, respectively. 
For all the methods, 
we set the local minibatch size to 50, weight decay to 1e-3 and learning rate to 0.05 with a learning rate decay of 0.998 per round. 
Further,
clients are set to perform local optimization for five epochs on CIFAR-100 and one epoch on other datasets. 
For all the baselines,
we perform grid search to fine-tune their hyper-parameters, whose 
search space is 
summarized in Table~\ref{tab:seach-space} of Appendix.
We implement FedFV and FedDC with the official codes released by their authors
and use the codes released by~\cite{gao2022feddc} for other baselines.
For each dataset, 
we adopt the same neural network model for all the methods.
Specifically, we adopt a fully connected network with one hidden layer of 64 units on MNIST.
Similar as in~\cite{mcmahan2017communication}, 
for CIFAR10, CIFAR-100 and FEMNIST, 
we use a ConvNet~\cite{lecun1998gradient} that contains 2 convolutional layers and 2 fully connected layers. 
All the experiments are run on one Tesla V100 GPU. 

\subsection{Results in Full Participation}
We next force all the clients to participate in each round of training
and show the classification results in table~\ref{tab:convergence-accuracy}.
We evaluate the performance of all the methods w.r.t. the weighted average accuracy, where the weights are proportional to the client sample sizes.
The accuracy is reported
with mean and standard deviation over three random runs.
From the table, 
we see that \fed~beats other competitors 
on MNIST, CIFAR-10, and CIFAR-100. 
While 
\fed\ is not the winner on
FEMNIST,
it can still achieve comparable performance with the best result. 
These observations show the effectiveness of 
our method \fed,
which optimizes the global objective while ensuring decreased risks for all the clients after each iteration. 

\begin{table}
\centering
\small
\resizebox{\linewidth}{!}{
\begin{tabular}{l c c c c }
\toprule
\specialrule{0em}{1.5pt}{1.5pt}
\midrule
\multirow{2}{*}{\bf Method}  
& \multirow{2}{*}{\bf MNIST}  
& \multirow{2}{*}{\bf FEMNIST}  
& \multirow{2}{*}{\bf CIFAR-10}  
& \multirow{2}{*}{\bf CIFAR-100}  
\\
& & & & 
\\
\hline
FedAvg
& 92.21 $\pm$ 0.18 
& 81.42 $\pm$ 0.11 
& 57.82 $\pm$ 0.24 
& 31.78 $\pm$ 0.89 
\\
FedProx
& 93.21 $\pm$ 0.01 
& 80.74 $\pm$ 0.07 
& 60.94 $\pm$ 0.10
& 31.39 $\pm$ 0.45 
\\
SCAFFOLD
& 92.55 $\pm$ 0.03 
& 79.32 $\pm$ 0.47 
& 66.77 $\pm$ 0.44 
& 28.11 $\pm$ 0.25 
\\
FedDyn
& 94.34 $\pm$ 0.51 
& \underline{86.00 $\pm$ 0.91} 
& \underline{74.44 $\pm$ 0.25} 
& \underline{35.92 $\pm$ 0.12} 
\\
FedFV
& 92.81 $\pm$ 0.24 
& 82.50 $\pm$ 0.06 
& 64.07 $\pm$ 0.27 
& 28.03 $\pm$ 0.82 
\\
FedDC
& \underline{95.22 $\pm$ 0.26} 
& \textbf{86.36 $\pm$ 0.34} 
& 72.64 $\pm$ 0.22 
& 22.47 $\pm$ 0.71 
\\
\midrule
FedCOME
& \textbf{95.61 $\pm$ 0.18} 
& 85.41 $\pm$ 0.05 
& \textbf{75.88 $\pm$ 0.36} 
& \textbf{37.66 $\pm$ 1.34} 
\\
\midrule
\specialrule{0em}{1.5pt}{1.5pt}
\bottomrule
\end{tabular}
}
\caption{
Performance comparison in the full participation setting. 
Best results are highlighted in bold, while the runner-ups' are underlined. 
}
\label{tab:convergence-accuracy}
\end{table}

To further study model efficiency,
we plot the convergence curves of all the methods on all the datasets in Figure~\ref{fig:convergence-curves}. 
From the figure,
we see that
\fed, FedDC and FedDyn are among the best three in terms of convergence speed.
While FedDC performs very well on MNIST and FEMNIST datasets,
it converges very slow on CIFAR-10 and CIFAR-100,
which shows its instability.
For \fed\ and FedDyn, 
they have comparable convergence speeds. 
For example,
both of them outperform other methods by a large margin
on CIFAR-10 and CIFAR-100.
However,
our method \fed\ can converge to higher accuracy values,
which are also clarified in Table~\ref{tab:convergence-accuracy}.
These results show that our method \fed\ is both effective and efficient.


\begin{figure*}
\centering
\subfigure[MNIST]{
\includegraphics[width=0.23\textwidth,trim=70 20 110 55,clip]{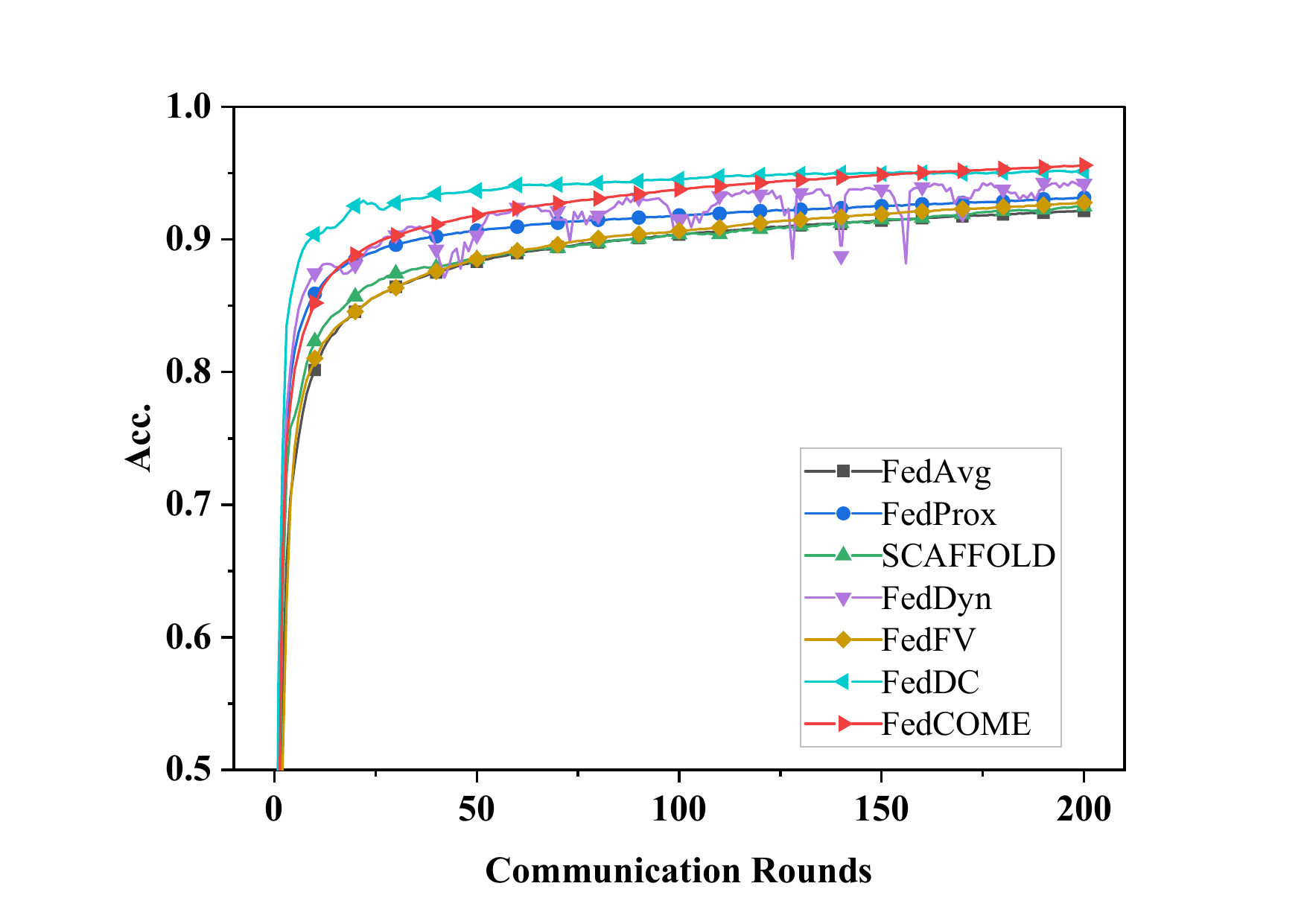}
}
\subfigure[FEMNIST]{
\includegraphics[width=0.23\textwidth,trim=70 20 110 55,clip]{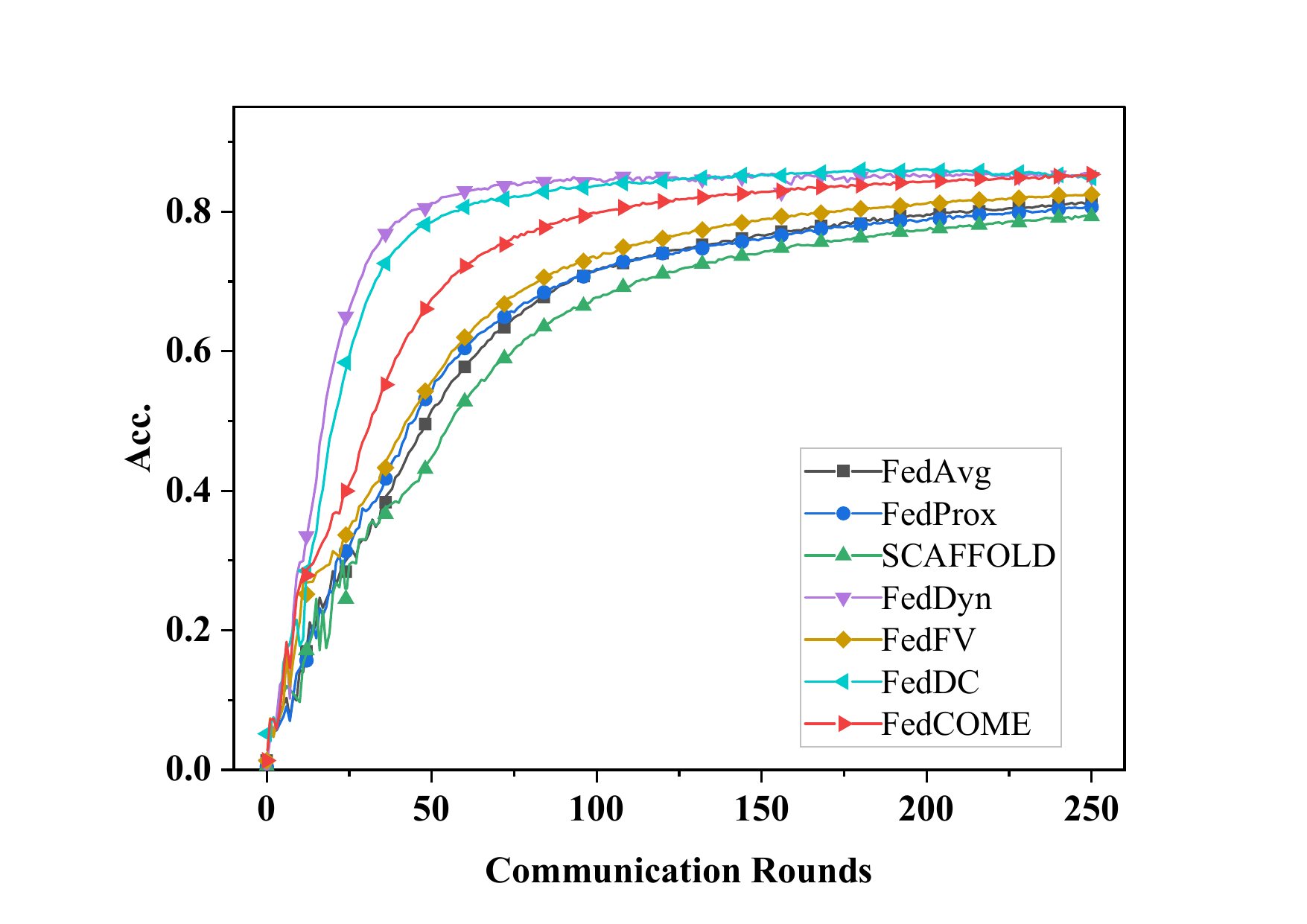}
}
\subfigure[CIFAR-10]{
\includegraphics[width=0.23\textwidth,trim=70 20 110 55,clip]{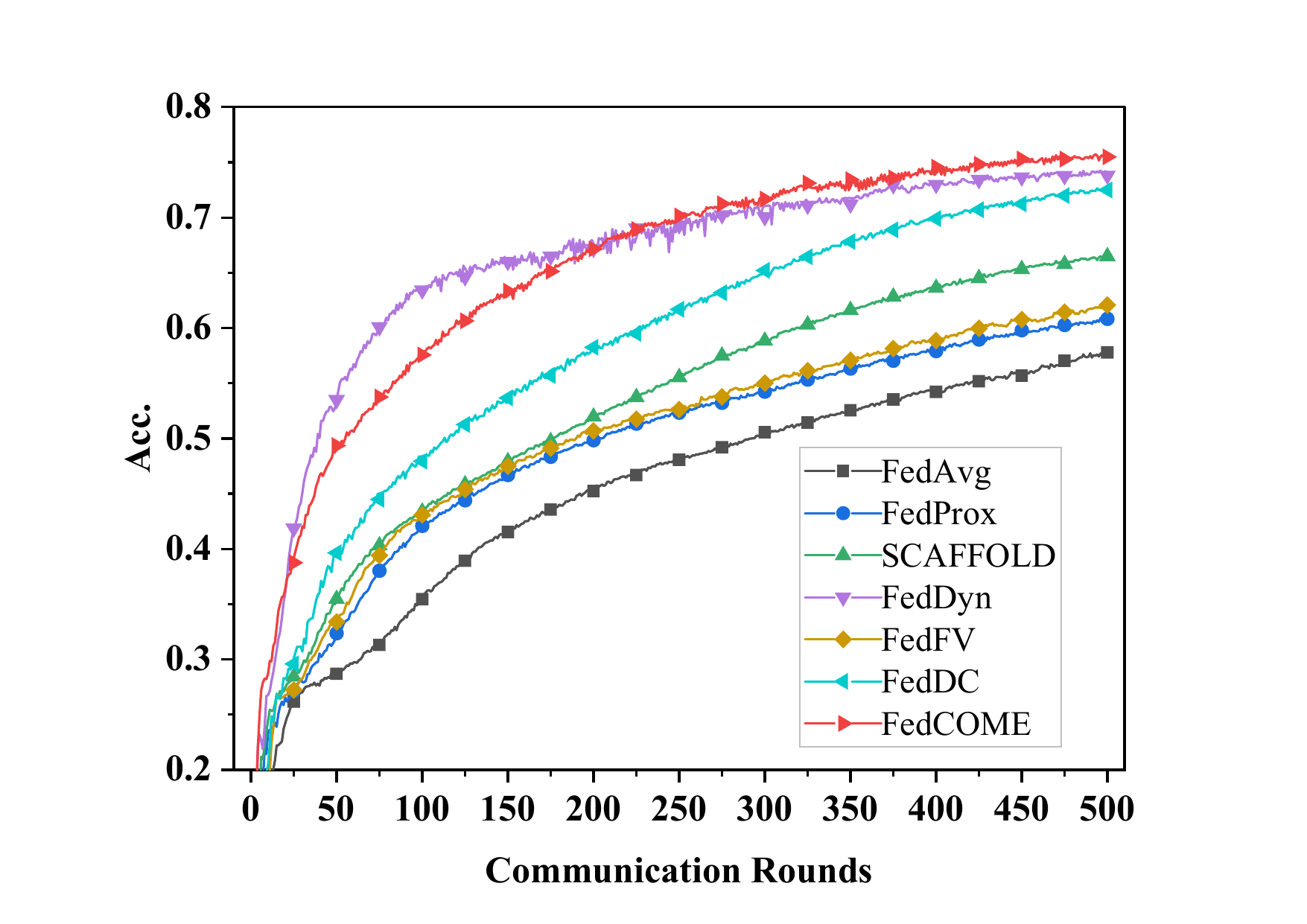}
}
\subfigure[CIFAR-100]{
\includegraphics[width=0.23\textwidth,trim=70 20 110 55,clip]{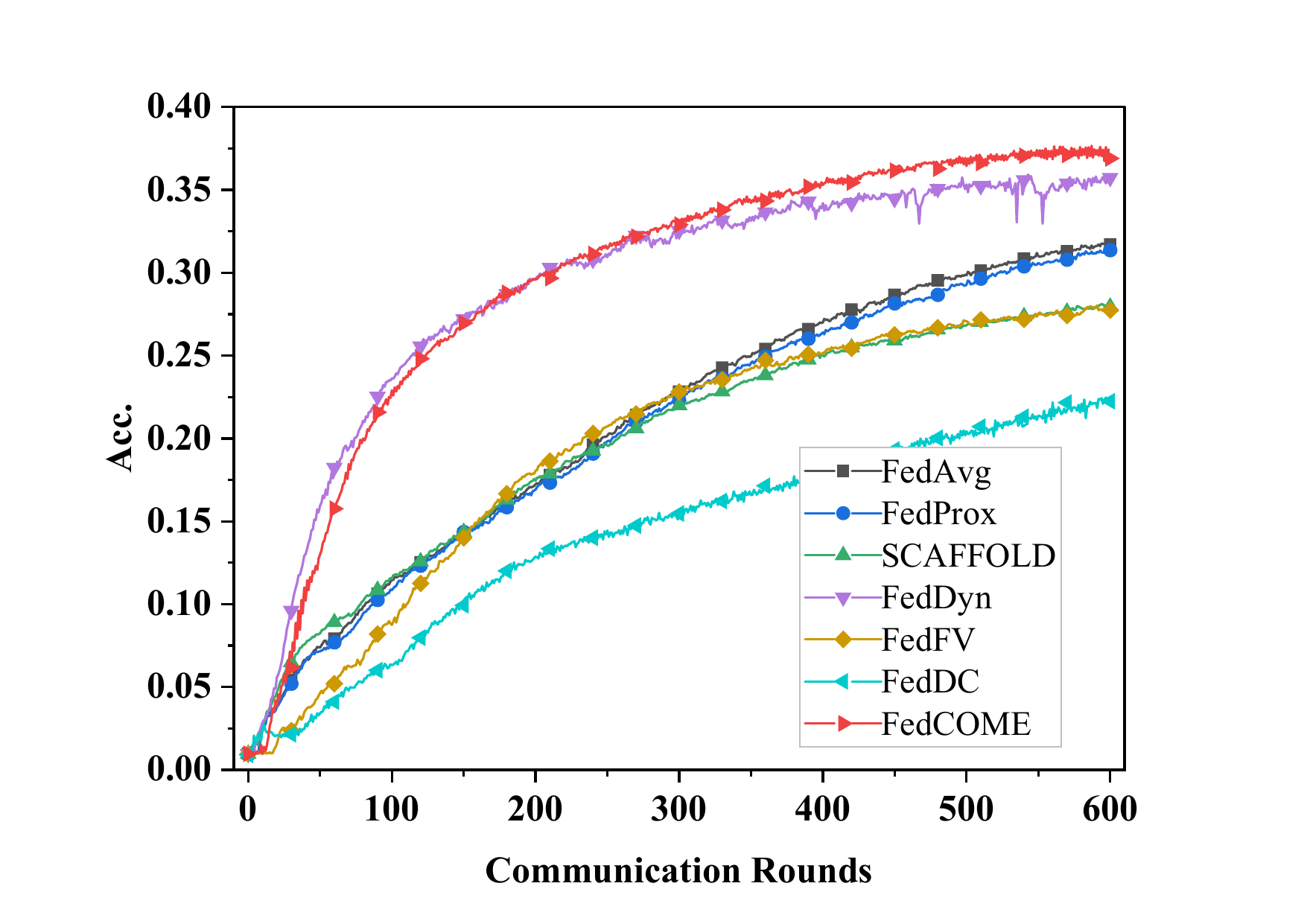}
}
\caption{Convergence comparison between \fed~and the baselines on all datasets.}
\label{fig:convergence-curves}
\end{figure*}


\subsection{Effect of Data Heterogeneity} 
To examine the robustness of \fed~against varying degrees of data heterogeneity, we further conduct experiments on CIFAR-10 to simulate different degrees of heterogeneity.
We set 
the number of classes $C \in \{2, 3, 4\}$ that each client has at most. 
The smaller $C$ is, the fewer data overlap between clients and the stronger data heterogeneity among clients. 
Experimental results are shown in Table~\ref{tab:data-heterogeneity}.
From the table,
we see that both \fed\ and FedDyn  
outperform other methods 
when meeting data heterogeneity.
Compared with FedDyn, 
\fed\ performs better as data heterogeneity increases.
This shows the advantage of \fed\ in dealing with data heterogeneity.
\begin{table}
    \centering 
    \resizebox{\linewidth}{!}{
    \begin{tabular}{@{}p{1.2cm} cccc@{}}
    \toprule
    \multirow{2}{*}{\bf Method} && \multicolumn{3}{c}{\textbf{\# Classes}}         \\
                                \cline{3-5} 
                                && \textbf{2}   &   \textbf{3}          & \textbf{4}        \\
    \midrule
    FedAvg                      && 57.82 $\pm$ 0.24 &   59.18 $\pm$ 0.41    & 63.97 $\pm$ 0.45  \\
    FedProx                     && 60.94 $\pm$ 0.10 &   62.72 $\pm$ 0.51    & 67.57 $\pm$ 0.24  \\
    SCAFFOLD                    && 66.77 $\pm$ 0.44 &   67.32 $\pm$ 0.46    & 67.72 $\pm$ 1.20  \\
    FedDyn                      && \underline{74.44 $\pm$ 0.25} &   \underline{76.21 $\pm$ 0.29} & \bf 80.20 $\pm$ 0.91  \\
    FedFV                       && 64.07 $\pm$ 0.27 &   65.70 $\pm$ 0.32  & 69.14 $\pm$ 0.83  \\
    FedDC                       && 72.64 $\pm$ 0.22 &   74.94 $\pm$ 1.71 & 75.31 $\pm$ 0.90  \\
    \midrule
    FedCOME                     && \bf 75.88 $\pm$ 0.36 &   \bf 76.82 $\pm$ 0.33  & \underline{78.12 $\pm$ 0.22} \\
    \bottomrule
    \end{tabular}
    }
    \caption{Effect of heterogeneity on CIFAR-10 dataset. Best results are in bold, while the runner-ups' are with underlines. }
    \label{tab:data-heterogeneity}
\end{table}

\begin{figure*}[t!]
    \centering
    \includegraphics[width=0.9\textwidth]{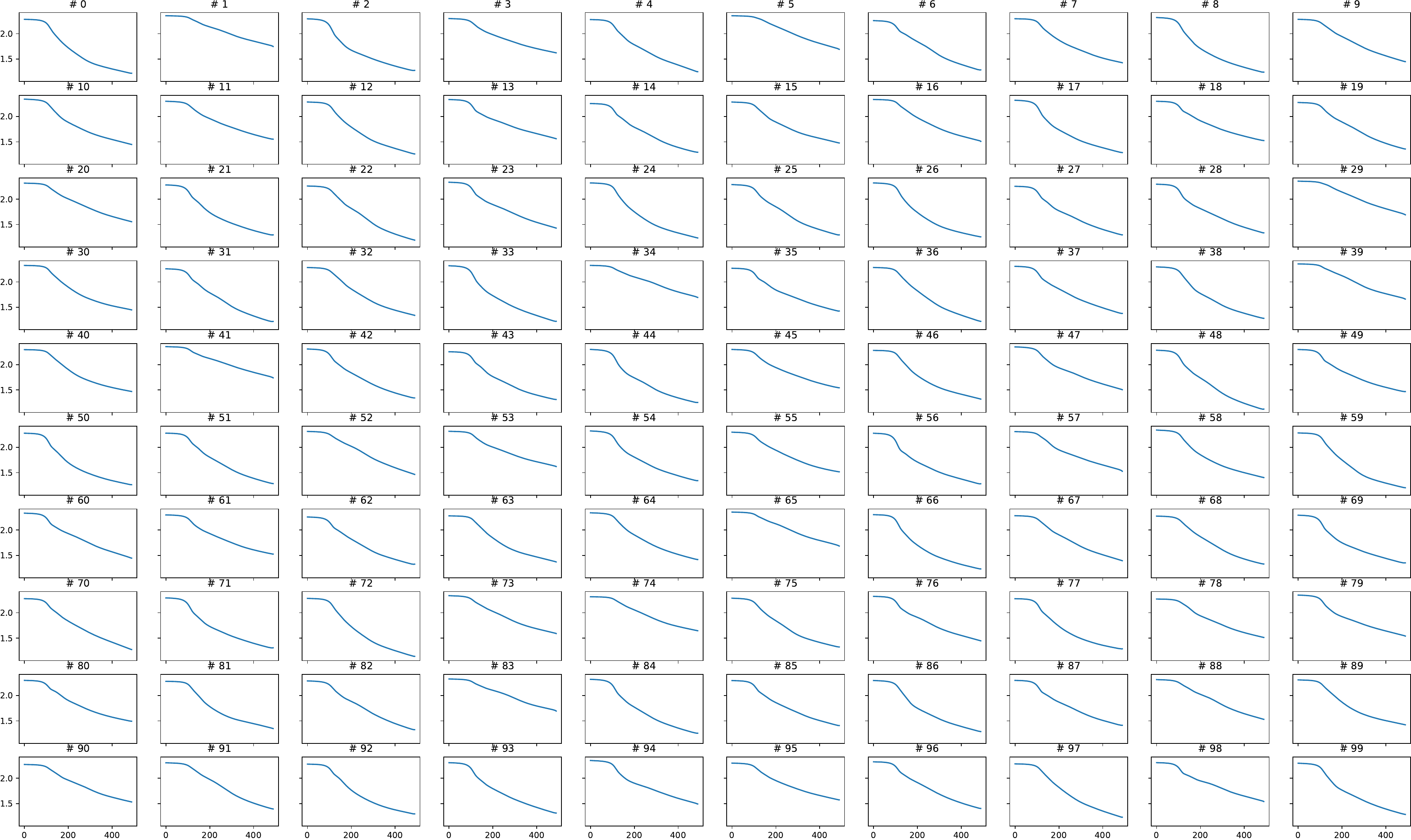}
    \caption{Loss curve of each client on CIFAR-10 dataset in full participation setting.}
    \label{fig:cifar10-full-decrease}
\end{figure*}

\subsection{Fairness}
\fed\ optimizes the global objective subject to decreased risks for all the clients after each communication round.
In other words,
all the clients can benefit from the federated training, which 
motivates further study on fairness.
Specifically,
we compare \fed~with FedFV, which is specially designed for handling fairness in FL. 
We plot the test accuracy distribution over the clients on FEMNIST and CIFAR-10 in Figure~\ref{fig:fairness-distribtion}. 
From the figure,
it is clear that 
\fed\ exhibits sharper 
distributions than FedFV on both datasets,
which shows that 
\fed\ can lead to better fairness among clients.

\begin{figure*}[t]
    \centering
    \includegraphics[width=0.95\textwidth]{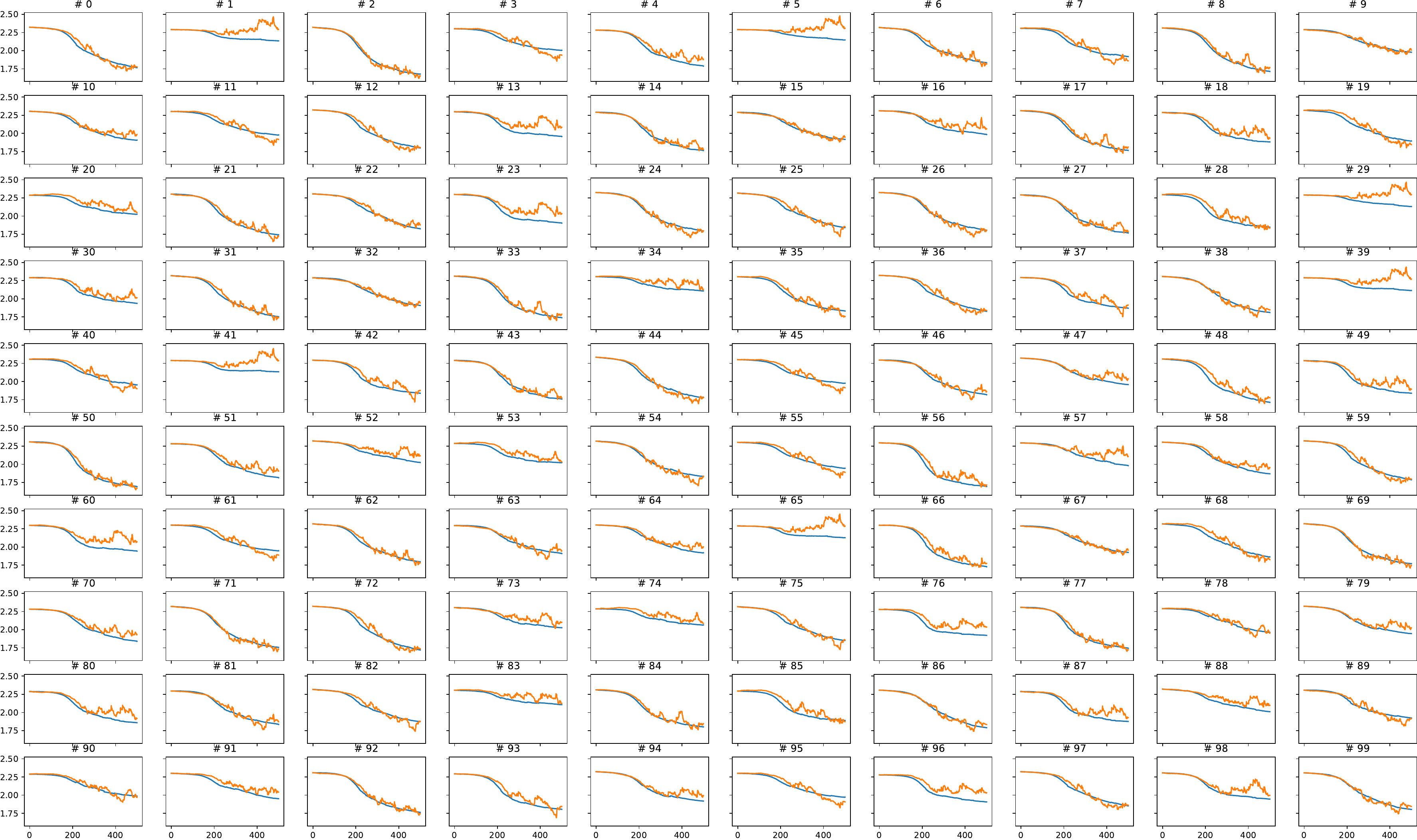}
    \caption{Loss curve of each client on CIFAR-10 dataset in the partial participation setting. \fed\ with our proposed client sampling strategy and random sampling are shown in blue and orange, respectively.}
    \label{fig:cifar10-partial-decrease}
\end{figure*}
\begin{figure*}[t!]
    \centering
    \includegraphics[width=1.0\linewidth,trim=0 435 0 0,clip]{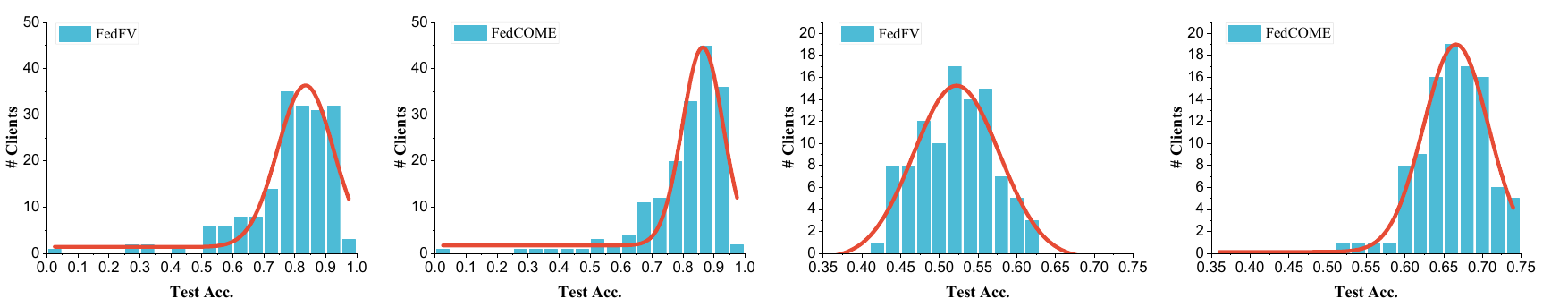}
    \caption{Distribution of test accuracy over all clients on FEMNIST (the first two panels) and CIFAR-10 (the last two panels). The x-axis represents the test accuracy interval, and the y-axis represents the number of clients whose accuracy falls within the corresponding interval.}
    \label{fig:fairness-distribtion}
\end{figure*}

\subsection{Results in Partial Participation}
To verify the effectiveness of our method in the partial participation FL scenario, 
we run all the methods on CIFAR-10 with the same settings as in the full participation scenario. 
We vary the participation ratio per round from $\{10\%, 20\%, 30\%, 40\%\}$. 
For each method, we run two variants: one with our proposed client sampling strategy (shown in regular fonts) and the other with random sampling (shown in small fonts). 
Experimental results are shown in Table~\ref{tab:partial-participation}. 
From the table, 
we observe that:
\begin{itemize}
    \item 
    \fed~beats other competitors across all the participation ratios with our proposed sampling strategy.
    With random sampling, \fed\ performs the best when the participation ratio is high (30\% and 40\%). 
    When the ratio is small (10\% and 20\%),
    FedDyn is the winner because 
    it can utilize
    the gradients of previously selected clients. 
    Despite this, \fed\ can achieve comparable performance with FedDyn. 
    These results show the superiority of \fed\ in handling data heterogeneity.
    \item All the methods with our proposed client sampling strategy generally perform better than using random sampling, which demonstrates the effectiveness of our client sampling strategy. 
    \item Our proposed sampling strategy can lead to large gain against random sampling when participation ratio is small.
    For example,
    when only 10\% of clients participate in training in each round, 
    \fed\ with 
    our client sampling strategy improves that with random sampling by 2.77\%.
    This shows that our sampling strategy can select more representative clients for the global data distribution.
\end{itemize}

\begin{table*}[htbp!]
    \centering
    \begin{tabular}{@{}l ccccc@{}}
    \toprule
        \multirow{3}{*}{\textbf{Method}}    &&  \multicolumn{4}{c}{\textbf{Participation Ratio}}  \\
    \cmidrule{3-6} 
    &&   \textbf{10\%} & \textbf{20\%} & \textbf{30\%}  & \textbf{40\%}\\
    \midrule
    FedAvg          && 60.31 $\pm$ 0.71 \ligry{(60.21 $\pm$ 0.30)} & 61.10 $\pm$ 0.83 \ligry{(60.67 $\pm$ 0.43)} & 61.29 $\pm$ 0.28 \ligry{(61.17 $\pm$ 0.24)}  & 62.34 $\pm$ 0.14 \ligry{(61.39 $\pm$ 0.51)}  \\
    FedProx         && 60.59 $\pm$ 0.35 \ligry{(60.47 $\pm$ 1.02)} & 61.70 $\pm$ 0.41 \ligry{(60.33 $\pm$ 0.29)} & 61.68 $\pm$ 0.19 \ligry{(60.92 $\pm$ 0.24)}  & 62.02 $\pm$ 0.59 \ligry{(61.42$\pm$ 0.17)} \\
    SCAFFOLD        && 57.65 $\pm$ 0.47 \ligry{(56.88 $\pm$ 0.34)} & 61.93 $\pm$ 0.41 \ligry{(60.23 $\pm$ 0.33)} & 62.11 $\pm$ 0.57 \ligry{(61.07 $\pm$ 0.37)}  & 62.06 $\pm$ 0.30 \ligry{(61.62 $\pm$ 0.43)} \\
    FedDyn          && \underline{67.73 $\pm$ 0.24} \ligry{(\bf{67.48 $\pm$ 0.24})} & \underline{70.67 $\pm$ 1.45} \ligry{(\textbf{69.68 $\pm$ 0.51})} & \underline{72.78 $\pm$ 0.21} \ligry{(\underline{71.84 $\pm$ 0.26})}  & \underline{73.56 $\pm$ 0.45} \ligry{(\underline{73.25 $\pm$ 0.60})} \\
    FedFV           && 61.00 $\pm$ 0.55 \ligry{(60.26 $\pm$ 0.42)} & 62.11 $\pm$ 0.43 \ligry{(61.74 $\pm$ 0.68)} & 63.34 $\pm$ 0.24 \ligry{(62.95 $\pm$ 0.18)}  & 63.81 $\pm$ 0.36 \ligry{(63.74 $\pm$ 0.16}) \\
    FedDC           && 59.09 $\pm$ 0.67 \ligry{(58.43 $\pm$ 1.30)} & 69.55 $\pm$ 1.23 \ligry{(69.16 $\pm$ 1.19)} & 70.99 $\pm$ 0.45 \ligry{(70.39 $\pm$ 1.30)}  & 71.59 $\pm$ 0.53 \ligry{(71.43 $\pm$ 0.60)} \\
    \midrule
    FedCOME         && \textbf{68.26 $\pm$ 0.74} \ligry{(\underline{65.49 $\pm$ 0.87})} & \textbf{72.37 $\pm$ 0.43} \underline{\ligry{(69.60 $\pm$ 0.59})} & \bf 73.81 $\pm$ 0.24 \ligry{(72.94 $\pm$ 0.41)}  & \bf 74.13 $\pm$ 0.16 \ligry{(73.98 $\pm$ 0.24)} \\
    \bottomrule
    \end{tabular}
    \caption{Comparison between FedCOME and other methods in the partial participation FL setting. Best results are marked in bold, while the runner-ups' are underlined. } 
    \label{tab:partial-participation}
\end{table*}






\subsection{Client Loss Study}
We end this section with an 
additional study on CIFAR-10 to show the loss of each client derived by \fed\ 
after each training round in both full and partial participation settings. 
We set learning rate to 0.01 
and each client performs local optimization for one epoch with full batch size. 
In the partial participation experiment, we set the participation ratio to 20\%. 
Figure~\ref{fig:cifar10-full-decrease} shows each client's loss curve in the full participation setting.
From the figure,
we see that each client's loss keeps decreasing with training epoches. 
This can be attributed to our consensus mechanism.
Further, in the partial participation setting, 
Fig.~\ref{fig:cifar10-partial-decrease} shows that
the loss curves of all the clients (including those clients not selected) derived by \fed\ 
with our proposed sampling strategy
keep decreasing while that with random sampling fluctuate frequently.
This further shows the effectiveness of our proposed sampling strategy.



\subsection{Sensitivity analysis of hyper-parameters}
We perform the hyper-parameter sensitivity study and show the results of FedCOME on CIFAR-10 dataset with a 20\% participation ratio. 
We present the experiment results in Table~\ref{table:sensitivity-analysis}. 
From the table,
we see that our method can achieve stable performance when
$\alpha \in \{0.5, 0.7, 0.9\}$ and $\mu \in \{0.5, 0.7, 0.9\}$. 
So we set $\alpha = 0.5$ and $\mu = 0.7$.

\begin{table}
\centering
\small
\resizebox{0.8\linewidth}{!}{
\begin{tabular}{c c c c c c}
\toprule
\diagbox{\textbf{$\alpha$}}{\textbf{$\mu$}} & \textbf{0.1}   & \textbf{0.3}   & \textbf{0.5}   & \textbf{0.7}   & \textbf{0.9} \\
\midrule
\textbf{0.1}   & 70.50 & 69.21 & 69.04 & 70.89 & 72.01 \\
\midrule
\textbf{0.3}   & 70.77 & 70.16 & 71.55 & 70.66 & 71.23 \\
\midrule
\textbf{0.5}   & 70.10 & 70.81 & 70.87 & 71.64 & 72.17 \\
\midrule
\textbf{0.7}   & 70.98 & 71.34 & 71.76 & 72.37 & 71.85 \\
\midrule
\textbf{0.9}   & 70.83 & 70.96 & 72.13 & 72.03 & 72.26 \\
\bottomrule
\end{tabular}
}
\caption{
Sensitivity analysis of hyper-parameters $\alpha$ and $\mu$.
}
\label{table:sensitivity-analysis}
\end{table}

\section{Conclusion}
In this paper, 
we studied data heterogeneity in FL and proposed \fed\ that provides a new perspective on the convergence of the global objective.
We presented a consensus mechanism to guarantee decreased risk for each client after each communication round.
Based on the consensus mechanism,
we theoretically proved the decrease of the global risk over the whole population.
To further generalize the consensus mechanism to partial participation FL scenario, we devised a novel client sampling strategy. 
We conducted extensive experiments on 4 benchmark datasets to show the effectiveness and efficiency of \fed. 
Also, we showed the advantage of \fed\ in ensuring fairness among clients. 
\label{sec:con}

\section*{Ethical Statement}
Same as FedAvg~\cite{mcmahan2017communication} and its variants like SCAFFOLD~\cite{karimireddy2020scaffold}, \fed\ also has a risk of information leakage due to the transmission of local gradients or model updates, but it can be easily mitigated with differential privacy or other defense techniques, such as gradient clipping~\cite{zhu2019deep}. 
\bibliography{ecai}

\begin{thebibliography}{10}

\bibitem{acar2020federated}
Durmus Alp~Emre Acar, Yue Zhao, Ramon Matas, Matthew Mattina, Paul Whatmough,
  and Venkatesh Saligrama, `Federated learning based on dynamic
  regularization', in {\em ICLR}, (2020).

\bibitem{caldas2018leaf}
Sebastian Caldas, Sai Meher~Karthik Duddu, Peter Wu, Tian Li, Jakub
  Kone{\v{c}}n{\`y}, H~Brendan McMahan, Virginia Smith, and Ameet Talwalkar,
  `Leaf: A benchmark for federated settings', {\em arXiv preprint
  arXiv:1812.01097}, (2018).

\bibitem{chen2021bridging}
Hong-You Chen and Wei-Lun Chao, `On bridging generic and personalized federated
  learning for image classification', in {\em International Conference on
  Learning Representations}, (2021).

\bibitem{chen2021fedmatch}
Jiangui Chen, Ruqing Zhang, Jiafeng Guo, Yixing Fan, and Xueqi Cheng,
  `Fedmatch: Federated learning over heterogeneous question answering data', in
  {\em Proceedings of the 30th ACM International Conference on Information \&
  Knowledge Management}, pp. 181--190, (2021).

\bibitem{Chen2020FedHealthAF}
Yiqiang Chen, Jindong Wang, Chaohui Yu, Wen Gao, and Xin Qin, `Fedhealth: A
  federated transfer learning framework for wearable healthcare', {\em IEEE
  Intelligent Systems}, {\bf 35}, (2020).

\bibitem{devlin2018bert}
Jacob Devlin, Ming-Wei Chang, Kenton Lee, and Kristina Toutanova, `Bert:
  Pre-training of deep bidirectional transformers for language understanding',
  {\em arXiv preprint arXiv:1810.04805}, (2018).

\bibitem{esfandiari2021cross}
Yasaman Esfandiari, Sin~Yong Tan, Zhanhong Jiang, Aditya Balu, Ethan Herron,
  Chinmay Hegde, and Soumik Sarkar, `Cross-gradient aggregation for
  decentralized learning from non-iid data', in {\em International Conference
  on Machine Learning}, pp. 3036--3046. PMLR, (2021).

\bibitem{french1999catastrophic}
Robert~M French, `Catastrophic forgetting in connectionist networks', {\em
  Trends in cognitive sciences}, {\bf 3}(4),  128--135, (1999).

\bibitem{gao2022feddc}
Liang Gao, Huazhu Fu, Li~Li, Yingwen Chen, Ming Xu, and Cheng-Zhong Xu, `Feddc:
  Federated learning with non-iid data via local drift decoupling and
  correction', in {\em Proceedings of the IEEE/CVF Conference on Computer
  Vision and Pattern Recognition}, pp. 10112--10121, (2022).

\bibitem{gatys2016image}
Leon~A Gatys, Alexander~S Ecker, and Matthias Bethge, `Image style transfer
  using convolutional neural networks', in {\em CVPR}, pp. 2414--2423, (2016).

\bibitem{gould1991algorithm}
Nicholas~IM Gould, `An algorithm for large-scale quadratic programming', {\em
  IMA Journal of Numerical Analysis}, {\bf 11}(3),  299--324, (1991).

\bibitem{hsu2019measuring}
Tzu-Ming~Harry Hsu, Hang Qi, and Matthew Brown, `Measuring the effects of
  non-identical data distribution for federated visual classification', {\em
  arXiv preprint arXiv:1909.06335}, (2019).

\bibitem{kairouz2021advances}
Peter Kairouz, H~Brendan McMahan, Brendan Avent, Aur{\'e}lien Bellet, Mehdi
  Bennis, Arjun~Nitin Bhagoji, Kallista Bonawitz, Zachary Charles, Graham
  Cormode, Rachel Cummings, et~al., `Advances and open problems in federated
  learning', {\em Foundations and Trends{\textregistered} in Machine Learning},
  {\bf 14}(1--2),  1--210, (2021).

\bibitem{karimireddy2020scaffold}
Sai~Praneeth Karimireddy, Satyen Kale, Mehryar Mohri, Sashank Reddi, Sebastian
  Stich, and Ananda~Theertha Suresh, `Scaffold: Stochastic controlled averaging
  for federated learning', in {\em ICML}.

\bibitem{kirkpatrick2017overcoming}
James Kirkpatrick, Razvan Pascanu, Neil Rabinowitz, Joel Veness, Guillaume
  Desjardins, Andrei~A Rusu, Kieran Milan, John Quan, Tiago Ramalho, Agnieszka
  Grabska-Barwinska, et~al., `Overcoming catastrophic forgetting in neural
  networks', {\em Proceedings of the national academy of sciences}, {\bf
  114}(13),  3521--3526, (2017).

\bibitem{krizhevsky2009learning}
Alex Krizhevsky, Geoffrey Hinton, et~al., `Learning multiple layers of features
  from tiny images', (2009).

\bibitem{lecun1998gradient}
Yann LeCun, L{\'e}on Bottou, Yoshua Bengio, and Patrick Haffner,
  `Gradient-based learning applied to document recognition', {\em Proceedings
  of the IEEE}, {\bf 86}(11),  2278--2324, (1998).

\bibitem{li2020federated}
Tian Li, Anit~Kumar Sahu, Manzil Zaheer, Maziar Sanjabi, Ameet Talwalkar, and
  Virginia Smith, `Federated optimization in heterogeneous networks', {\em
  Proceedings of Machine Learning and Systems}, {\bf 2},  429--450, (2020).

\bibitem{li2019feddane}
Tian Li, Anit~Kumar Sahu, Manzil Zaheer, Maziar Sanjabi, Ameet Talwalkar, and
  Virginia Smithy, `Feddane: A federated newton-type method', in {\em 2019 53rd
  Asilomar Conference on Signals, Systems, and Computers}, pp. 1227--1231.
  IEEE, (2019).

\bibitem{li2019fair}
Tian Li, Maziar Sanjabi, Ahmad Beirami, and Virginia Smith, `Fair resource
  allocation in federated learning', in {\em ICLR}, (2019).

\bibitem{li2019convergence}
Xiang Li, Kaixuan Huang, Wenhao Yang, Shusen Wang, and Zhihua Zhang, `On the
  convergence of fedavg on non-iid data', in {\em ICLR}, (2019).

\bibitem{li2021fedh2l}
Yiying Li, Wei Zhou, Huaimin Wang, Haibo Mi, and Timothy~M Hospedales, `Fedh2l:
  Federated learning with model and statistical heterogeneity', {\em arXiv
  preprint arXiv:2101.11296}, (2021).

\bibitem{li2017learning}
Zhizhong Li and Derek Hoiem, `Learning without forgetting', {\em IEEE
  transactions on pattern analysis and machine intelligence}, {\bf 40}(12),
  2935--2947, (2017).

\bibitem{liang2019variance}
Xianfeng Liang, Shuheng Shen, Jingchang Liu, Zhen Pan, Yifei Cheng, and Enhong
  Chen, `Variance reduced local sgd with lower communication complexity',
  (2019).

\bibitem{lin2020ensemble}
Tao Lin, Lingjing Kong, Sebastian~U Stich, and Martin Jaggi, `Ensemble
  distillation for robust model fusion in federated learning', {\em Advances in
  Neural Information Processing Systems}, {\bf 33},  2351--2363, (2020).

\bibitem{lin2020meta}
Yujie Lin, Pengjie Ren, Zhumin Chen, Zhaochun Ren, Dongxiao Yu, Jun Ma,
  Maarten~de Rijke, and Xiuzhen Cheng, `Meta matrix factorization for federated
  rating predictions', in {\em Proceedings of the 43rd International ACM SIGIR
  Conference on Research and Development in Information Retrieval}, pp.
  981--990, (2020).

\bibitem{lopez2017gradient}
David Lopez-Paz and Marc'Aurelio Ranzato, `Gradient episodic memory for
  continual learning', {\em Advances in neural information processing systems},
  {\bf 30}, (2017).

\bibitem{mcmahan2017communication}
Brendan McMahan, Eider Moore, Daniel Ramage, Seth Hampson, and Blaise~Aguera
  y~Arcas, `Communication-efficient learning of deep networks from
  decentralized data', in {\em Artificial intelligence and statistics}, pp.
  1273--1282. PMLR, (2017).

\bibitem{mohri2019agnostic}
Mehryar Mohri, Gary Sivek, and Ananda~Theertha Suresh, `Agnostic federated
  learning', in {\em International Conference on Machine Learning}, pp.
  4615--4625. PMLR, (2019).

\bibitem{muhammad2020fedfast}
Khalil Muhammad, Qinqin Wang, Diarmuid O'Reilly-Morgan, Elias Tragos, Barry
  Smyth, Neil Hurley, James Geraci, and Aonghus Lawlor, `Fedfast: Going beyond
  average for faster training of federated recommender systems', in {\em
  Proceedings of the 26th ACM SIGKDD International Conference on Knowledge
  Discovery \& Data Mining}, pp. 1234--1242, (2020).

\bibitem{pathak2020fedsplit}
Reese Pathak and Martin~J Wainwright, `Fedsplit: An algorithmic framework for
  fast federated optimization', {\em Advances in Neural Information Processing
  Systems}, {\bf 33},  7057--7066, (2020).

\bibitem{povey2011kaldi}
Daniel Povey, Arnab Ghoshal, Gilles Boulianne, Lukas Burget, Ondrej Glembek,
  Nagendra Goel, Mirko Hannemann, Petr Motlicek, Yanmin Qian, Petr Schwarz,
  et~al., `The kaldi speech recognition toolkit', in {\em IEEE 2011 workshop on
  automatic speech recognition and understanding}, number CONF. IEEE Signal
  Processing Society, (2011).

\bibitem{radford2019language}
Alec Radford, Jeffrey Wu, Rewon Child, David Luan, Dario Amodei, Ilya
  Sutskever, et~al., `Language models are unsupervised multitask learners',
  {\em OpenAI blog}, {\bf 1}(8), ~9, (2019).

\bibitem{reddy1976speech}
D~Raj Reddy, `Speech recognition by machine: A review', {\em Proceedings of the
  IEEE}, {\bf 64}(4),  501--531, (1976).

\bibitem{schubiger2020gpu}
Michel Schubiger, Goran Banjac, and John Lygeros, `Gpu acceleration of admm for
  large-scale quadratic programming', {\em Journal of Parallel and Distributed
  Computing}, {\bf 144},  55--67, (2020).

\bibitem{shen2019faster}
Shuheng Shen, Linli Xu, Jingchang Liu, Xianfeng Liang, and Yifei Cheng, `Faster
  distributed deep net training: Computation and communication decoupled
  stochastic gradient descent', {\em arXiv preprint arXiv:1906.12043}, (2019).

\bibitem{shoham2019overcoming}
Neta Shoham, Tomer Avidor, Aviv Keren, Nadav Israel, Daniel Benditkis, Liron
  Mor-Yosef, and Itai Zeitak, `Overcoming forgetting in federated learning on
  non-iid data', {\em arXiv preprint arXiv:1910.07796}, (2019).

\bibitem{wang2019federated}
Hongyi Wang, Mikhail Yurochkin, Yuekai Sun, Dimitris Papailiopoulos, and
  Yasaman Khazaeni, `Federated learning with matched averaging', in {\em
  International Conference on Learning Representations}, (2019).

\bibitem{wang2020tackling}
Jianyu Wang, Qinghua Liu, Hao Liang, Gauri Joshi, and H~Vincent Poor, `Tackling
  the objective inconsistency problem in heterogeneous federated optimization',
  {\em NeurIPS},  7611--7623, (2020).

\bibitem{wang2021federated}
Zheng Wang, Xiaoliang Fan, Jianzhong Qi, Chenglu Wen, Cheng Wang, and Rongshan
  Yu, `Federated learning with fair averaging', {\em arXiv preprint
  arXiv:2104.14937}, (2021).

\bibitem{zhao2003face}
Wenyi Zhao, Rama Chellappa, P~Jonathon Phillips, and Azriel Rosenfeld, `Face
  recognition: A literature survey', {\em ACM computing surveys (CSUR)}, {\bf
  35}(4),  399--458, (2003).

\bibitem{zhao2018federated}
Yue Zhao, Meng Li, Liangzhen Lai, Naveen Suda, Damon Civin, and Vikas Chandra,
  `Federated learning with non-iid data', {\em arXiv preprint
  arXiv:1806.00582}, (2018).

\bibitem{zhu2019deep}
Ligeng Zhu, Zhijian Liu, and Song Han, `Deep leakage from gradients', {\em
  Advances in neural information processing systems}, {\bf 32}, (2019).

\bibitem{zhu2021data}
Zhuangdi Zhu, Junyuan Hong, and Jiayu Zhou, `Data-free knowledge distillation
  for heterogeneous federated learning', in {\em International Conference on
  Machine Learning}, pp. 12878--12889. PMLR, (2021).

\end{thebibliography}

\newpage
\appendix
\setcounter{equation}{0}

\section{Datasets}
We use four FL benchmark datasets, including MNIST~\cite{lecun1998gradient}, FEMNIST~\cite{caldas2018leaf}, CIFAR-10~\cite{krizhevsky2009learning} and CIFAR-100~\cite{krizhevsky2009learning}.
Details on these datasets are summarized in Table~\ref{tab:stats}.

\begin{itemize}
    \item \textbf{MNIST} is a handwritten digit recognition dataset. Following~\cite{mcmahan2017communication}, we partition the dataset into 100 clients evenly, each of which has only 2 out of 10 classes.
    \item \textbf{FEMNIST} is a more complex version of MNIST. 
    It contains 62-class handwritten digits and letters. 
    This dataset is partitioned based on the writer of digit/letter, which is inherently heterogeneous. 
    \item \textbf{CIFAR-10} is a 10-class image classification dataset. 
    Similar as MNIST, 
    we construct a 
    federated dataset 
    composed of 100 clients, each of which has only two classes. 
    \item \textbf{CIFAR-100} is a 100-class image classification dataset. 
    We split it into 20 subsets by superclass labels, 
    where each subset is evenly distributed into 5 clients. 
    There are 100 clients in total and 
    data heterogeneity exists between clients from different subsets. 
\end{itemize}

\begin{table}
    \centering
    \begin{tabular}{p{1.4cm} c}
    \toprule
        \bf Method  & \bf Hyper-parameter space \\
        \midrule
        FedAvg      & - \\
        FedProx     & $\mu \in \{0.001, 0.01, 0.1\}$ \\
        SCAFFOLD    & - \\
        FedDyn      & $\alpha \in \{0.01,0.1, 1\}$ \\
        FedFV       & $\alpha \in \{0, 0.1, 0.2, \frac{1}{3}, 0.5, \frac{2}{3}\}, \tau \in\{0, 1, 3, 10\}$ \\
        FedDC       & $\alpha \in\left\{0.01,0.1, 1\right\}$ \\
    \bottomrule
    \end{tabular}
    \caption{The hyper-parameter space of the grid search for all baselines. "-" denotes the corresponding method has no tunable hyper-parameter.}
    \label{tab:seach-space}
\end{table}

\begin{table*}[h!]
\centering
\small
\begin{tabular}{ l c c c c c} 
\toprule
\specialrule{0em}{1.5pt}{1.5pt}
\midrule
\multirow{2}{*}{\bf Dataset} & \bf \multirow{2}{*}{\#(Training, testing)} & \bf \multirow{2}{*}{\bf \#Classes} & \multirow{2}{*}{\bf \#Clients} &  \bf \multirow{2}{*}{\#(Training, testing)/client} & \bf \multirow{2}{*}{\bf \#Classes/client}\\
\\
\midrule
MNIST       & (60000, 10000)    & 10    & 100   & (600, 100)    & 2 \\
FEMNIST     & (36692, 4178)     & 62    & 196   & (187, 21)     & - \\
CIFAR-10    & (50000, 10000)    & 10    & 100   & (500, 100)    & 2\\
CIFAR-100   & (50000, 10000)    & 100   & 100   & (500, 100)    & 5\\
\midrule
\specialrule{0em}{1.5pt}{1.5pt}
\bottomrule
\end{tabular}
\caption{
Overall statistics of datasets used for experiments. 
FEMNIST is heterogeneous by nature, while the others follow pathological non-IID partitions~\cite{mcmahan2017communication}. 
\#Classes/client: the number of classes a client most has in the pathological data partition setting, and "-" denotes a dataset is naturally heterogeneous. 
\#(Training, test)/client: average size of client's training and test sets. 
}
\label{tab:stats}
\end{table*}

\section{Baselines}
We consider 6 other methods as baselines, 
whose details are given as follows.
In particular,
{FedProx}, {SCAFFOLD}, {FedDyn} and {FedDC} are specially designed for addressing data heterogeneity, while 
{FedFV} deals with fairness across clients. 

\begin{itemize}
    \item FedAvg~\cite{mcmahan2017communication} is the vanilla FL method that 
    aggregates model updates weighted by their sample sizes.
    \item FedProx~\cite{li2020federated} adds a proximal item to each client's local optimization objective to prevent local model from diverging from global model.
    \item SCAFFOLD~\cite{karimireddy2020scaffold}
    regularizes the local training with estimated control variates to alleviate client-drift caused by heterogeneity.
    \item FedDyn~\cite{acar2020federated} updates the local regularizer dynamically to ensure the minima of the local empirical loss conforms with that of the global empirical risk. 
    \item FedFV~\cite{wang2021federated} performs gradient projection between clients' conflicting gradients to address the unfairness issue. 
    \item FedDC~\cite{gao2022feddc} utilizes an auxiliary local drift variable to track and bridge the gap between the parameters of the local and the global model.
\end{itemize}
\section{Pseudocodes}
We provide the pseudocodes of FedSGD and \fed\ in Algorithm~\ref{alg:fedsgd} and~\ref{alg:fedcome}, respectively.
\begin{algorithm}[h]
\caption{SGD based \fed}
\label{alg:fedsgd}
\textbf{Input}: initialized parameter $\theta^0$, total communication rounds $T$ and learning rate $\eta$ \\
\textbf{Output}: $\theta^T$
\begin{algorithmic}[1] 
\FOR{each round $t=0,1,...,T-1$}
    \STATE Server sends $\theta^t$ to all clients in $U$ \\
    \FOR{each client $i \in U$}
        \STATE Estimate the gradient of $F_i$ w.r.t. $\theta$ using batch $b_i$ \\
        $g_i^t = \nabla_{\theta} F_i(\theta^t; b_i)$ \\
    \ENDFOR
    \STATE Client $i$ sends $g_i^t$ back to server \\
    \FOR{$i \in \{1, 2, \cdots, M$\}}
        \STATE $\tilde{g}_i^t \leftarrow QP(\{g_j^t|j \in U \})$ \;
    \ENDFOR
    \STATE Server updates global model: $\theta^{t+1} = \theta^t - \eta \frac{1}{N} \sum_{i \in U} \tilde{g}_i^t$ \\
\ENDFOR
\end{algorithmic}
\end{algorithm}

\begin{algorithm}[t]
\caption{fedAvg based FedCOME}
\label{alg:fedcome}
\textbf{Input}: initialized parameter $\theta^0$,
total communication rounds $T$,local update epochs $E$,
local minibatch size $B$ \\
\textbf{Output}: $\theta^T$
\begin{algorithmic}[1] 
\FOR{each round $t = 1, \cdots, T$}
    \STATE Sample active client set $\mathcal{P}_t$ of size $M$ via optimizing problem~\ref{sampling-objective} \\ 
    \FOR{each client $i \in \mathcal{P}_t$}
        \STATE $\mathcal{B}_k \leftarrow$ Split local dataset $D_i$ into batches of size $B$ \\
        \STATE $\theta^{t}_0 \leftarrow \theta^{t-1}$ \\
           \FOR{each local batch  $b_{i,k}$ from 0 to $K-1$}
                \STATE $g_{i, k}^t = \nabla_{\theta^{t}_{i, k}}F_i(\theta^{t}_{i, k}; b_{i,k})$
    	        \STATE $\theta^{t}_{i, k+1} \leftarrow \theta^{t}_{i, k} - \eta g_{i, k}^t$
	    \ENDFOR
	    \STATE $g_i^{t} \leftarrow \theta^{t}_{i, K} - \theta^{t}_{i, 0}$ \\
    \ENDFOR
    
    \FOR{$i \in \{1, 2, \cdots, M$\}}
        \STATE $\tilde{g}_i^t \leftarrow QP(\{g_j^t|j \in P_t \})$ \;
    \ENDFOR
    \STATE $\theta^{t+1} \leftarrow \theta^t - \eta_g\frac{1}{M} \sum_{i \in \mathcal{P}_t} \tilde{g}_i^t$ \\
\ENDFOR
\end{algorithmic}
\end{algorithm}

\section{Convergence Analysis}
Here we provide convergence analysis for algorithm  \ref{alg:fedcome}.

\subsection{Assumption}

\asmLsmoothness*
\asmLossBoundedBelow*
\asmBoundedQPDrift*
\asmBoundedLocalNoise*

\subsection{Lemma}
\begin{lemma}
\label{g_QP to g}
Let all assumptions hold. Let $g_i$ be the update of client i  for all $i \in[N]:=\{1,2, \ldots, N\}$. Thus the following relationship holds
$$
\mathbb{E}[\|\frac{1}{N} \sum_{i=1}^N \tilde{\mathbf{g}}_i\|^2] \leq 2 K^2\sigma_{QP}^2 + 2\mathbb{E}\left[\left\|\frac{1}{N} \sum_{i=1}^N \mathbf{g}^i\right\|^2\right]  .
    $$
\end{lemma}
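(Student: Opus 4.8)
The plan is to split the averaged corrected gradient into the averaged original gradient plus the averaged correction (the QP drift), and then control each piece separately. Concretely, I would write
\[
\frac{1}{N}\sum_{i=1}^N \tilde{g}_i = \frac{1}{N}\sum_{i=1}^N (\tilde{g}_i - g_i) + \frac{1}{N}\sum_{i=1}^N g_i,
\]
and apply the elementary inequality $\|a+b\|^2 \le 2\|a\|^2 + 2\|b\|^2$ to obtain
\[
\left\|\frac{1}{N}\sum_{i=1}^N \tilde{g}_i\right\|^2 \le 2\left\|\frac{1}{N}\sum_{i=1}^N (\tilde{g}_i - g_i)\right\|^2 + 2\left\|\frac{1}{N}\sum_{i=1}^N g_i\right\|^2 .
\]
The second term is already exactly (twice) the quantity appearing on the right-hand side of the claim, so all the work goes into the first term.

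First I would handle the drift term. By convexity of $\|\cdot\|^2$ (equivalently Jensen's inequality, or Cauchy--Schwarz applied across the index $i$), the norm of the average is bounded by the average of the norms:
\[
\left\|\frac{1}{N}\sum_{i=1}^N (\tilde{g}_i - g_i)\right\|^2 \le \frac{1}{N}\sum_{i=1}^N \|\tilde{g}_i - g_i\|^2 .
\]
Taking expectations and invoking the Bounded QP Drift assumption (Assumption~\ref{bounded QP drift}), which yields $\mathbb{E}\|\tilde{g}_i - g_i\|^2 \le K^2\eta^2\sigma_{QP}^2$ uniformly in $i$, the factor $\frac{1}{N}\sum_{i=1}^N$ collapses to a single copy of the bound, giving $\mathbb{E}\|\frac{1}{N}\sum_i(\tilde{g}_i - g_i)\|^2 \le K^2\eta^2\sigma_{QP}^2$. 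Substituting back produces the stated inequality (with the drift constant $2K^2\sigma_{QP}^2$, up to the $\eta^2$ factor carried by the normalization in the assumption).

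This is essentially a decompose-and-bound estimate, so I do not expect a genuine obstacle; the only point requiring care is matching constants. Since Assumption~\ref{bounded QP drift} normalizes the drift by $K\eta$, one must track the induced $K^2\eta^2$ factor and confirm it is consistent with the $2K^2\sigma_{QP}^2$ written in the statement---I would flag whether the $\eta^2$ is meant to be absorbed into the definition of $\sigma_{QP}$ or is a typographical omission. Because the assumption bounds each client's drift \emph{in expectation} separately, no independence or cross-term cancellation between clients is needed; the expectation is over the minibatch stochasticity, and the Jensen step is deterministic, so the two operations commute without issue.
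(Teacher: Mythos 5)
Your proof is correct and follows essentially the same route as the paper's: the same decomposition $\tilde{g}_i = (\tilde{g}_i - g_i) + g_i$, the same inequality $\|a+b\|^2 \le 2\|a\|^2 + 2\|b\|^2$, the same Jensen-type step to pull the norm inside the average, and the same invocation of Assumption~\ref{bounded QP drift}. Your flag about the missing $\eta^2$ factor is also well taken: the paper's own step (c) makes the identical jump from $\frac{1}{N}\sum_i \mathbb{E}\|\tilde{g}_i - g_i\|^2$ to $K^2\sigma_{QP}^2$ despite the assumption normalizing by $K\eta$, and indeed the $\eta^2$ reappears when the lemma is applied to bound $A_3$ in the theorem, so the statement as written silently absorbs (or omits) that factor.
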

\begin{proof}
$$
\begin{aligned}
&\mathbb{E}\left[\left\|\frac{1}{N} \sum_{i=1}^N \tilde{\mathbf{g}}^i\right\|^2\right] \\
&=\mathbb{E}\left[\left\|\frac{1}{N} \sum_{i=1}^N\left(\tilde{\mathbf{g}}^i-\mathbf{g}^i+\mathbf{g}^i\right)\right\|^2\right]\\
&=\mathbb{E}\left[\left\|\frac{1}{N} \sum_{i=1}^N\left(\tilde{\mathbf{g}}^i-\mathbf{g}^i\right)+\frac{1}{N} \sum_{i=1}^N \mathbf{g}^i\right\|^2\right] \\
\end{aligned}
$$
$$
\begin{aligned}
& \stackrel{a}{\leq} 2 \mathbb{E}\left[\left\|\frac{1}{N} \sum_{i=1}^N\left(\tilde{\mathbf{g}}^i-\mathbf{g}^i\right)\right\|^2+\left\|\frac{1}{N} \sum_{i=1}^N \mathbf{g}^i\right\|^2\right] \\
& \stackrel{b}{\leq} 2 \frac{1}{N^2} \mathbb{E}\left[N \sum_{i=1}^N\left\|\tilde{\mathbf{g}}^i-\mathbf{g}^i\right\|^2\right]+2\mathbb{E}\left[\left\|\frac{1}{N} \sum_{i=1}^N \mathbf{g}^i\right\|^2\right] \\
& \stackrel{c}{\leq} 2 K^2\sigma_{QP}^2 + 2\mathbb{E}\left[\left\|\frac{1}{N} \sum_{i=1}^N \mathbf{g}^i\right\|^2\right] 
\end{aligned}
$$
(a) refers to the fact that the inequality $\|\mathbf{a}+\mathbf{b}\|^2 \leq 2\|\mathbf{a}\|^2+2\|\mathbf{b}\|^2$. (b) holds as $\left\|\sum_{i=1}^N \mathbf{a}_i\right\|^2 \leq N \sum_{i=1}^N\left\|\mathbf{a}_i\right\|^2$. (c) follows from Assumption \ref{bounded QP drift}.
\end{proof}

\begin{lemma} \label{one round progress}
(one round progress) For any step-size satisfying $\eta \leq \frac{1}{8 L K}$, we can have the following results:
$$
\frac{1}{N} \sum_{i=1}^N \mathbb{E}[\|\theta^t_{i, k} -\theta^t\|^2] \leq 5 K \eta_L^2(\sigma_L^2+6 K \sigma_G^2)+30 K^2 \eta_L^2\|\nabla f(\theta^t)\|^2
$$
\end{lemma}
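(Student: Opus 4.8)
The plan is to establish a one-step recursion for the client drift $\mathbb{E}\|\theta^t_{i,k}-\theta^t\|^2$ and then unroll it over the $K$ local iterations, finally averaging over clients. I write the local update as $\theta^t_{i,k}-\theta^t = (\theta^t_{i,k-1}-\theta^t) - \eta g^t_{i,k-1}$ with $g^t_{i,k-1}=\nabla_{\theta} F_i(\theta^t_{i,k-1};b_{i,k-1})$ and the initialization $\theta^t_{i,0}=\theta^t$, so the $k=0$ term vanishes. Conditioning on $\theta^t_{i,k-1}$ and using that the minibatch gradient is unbiased, I split off the stochastic noise through $\mathbb{E}\|X\|^2 = \|\mathbb{E}X\|^2 + \mathbb{E}\|X-\mathbb{E}X\|^2$; Assumption~\ref{bounded local noise} then contributes an additive $\eta^2\sigma_L^2$, leaving the deterministic quantity $\mathbb{E}\|(\theta^t_{i,k-1}-\theta^t)-\eta\nabla F_i(\theta^t_{i,k-1})\|^2$ to control.

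Next I would add and subtract $\nabla F_i(\theta^t)$ and $\nabla F(\theta^t)$ to decompose $\nabla F_i(\theta^t_{i,k-1})$ into a smoothness part $\nabla F_i(\theta^t_{i,k-1})-\nabla F_i(\theta^t)$, a heterogeneity part $\nabla F_i(\theta^t)-\nabla F(\theta^t)$, and the global gradient $\nabla F(\theta^t)$. Applying the relaxed triangle inequality $\|a+b\|^2 \le (1+\tfrac{1}{2K-1})\|a\|^2 + 2K\|b\|^2$ (isolating the $\eta\nabla F(\theta^t)$ term) and then splitting the remaining pieces, I use Assumption~\ref{assumption:L-smoothness} to bound $\|\nabla F_i(\theta^t_{i,k-1})-\nabla F_i(\theta^t)\|^2 \le L^2\|\theta^t_{i,k-1}-\theta^t\|^2$, which folds back into the drift term being estimated. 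This yields a recursion of the form
\begin{equation}
\mathbb{E}\|\theta^t_{i,k}-\theta^t\|^2 \le \Big(1+\tfrac{1}{K-1}\Big)\,\mathbb{E}\|\theta^t_{i,k-1}-\theta^t\|^2 + \eta^2\sigma_L^2 + 2K\eta^2\|\nabla F_i(\theta^t)-\nabla F(\theta^t)\|^2 + 2K\eta^2\|\nabla F(\theta^t)\|^2,
\end{equation}
where the step-size condition $\eta\le\tfrac{1}{8LK}$ is exactly what forces the combined coefficient $(1+\tfrac{1}{2K-1})(1+2K\eta^2L^2)$ down to at most $1+\tfrac{1}{K-1}$.

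Finally I would unroll this recursion from $k=0$ to the desired index. Since the per-step growth factor is at most $1+\tfrac{1}{K-1}$ and the number of steps is at most $K$, the accumulated geometric factor $\sum_{j=0}^{K-1}(1+\tfrac{1}{K-1})^j$ is bounded by an absolute constant (roughly $eK$), which I would bound numerically by the constants $5$ and $30$ appearing in the statement. Averaging over $i\in[N]$ turns the heterogeneity term into $\tfrac{1}{N}\sum_i\|\nabla F_i(\theta^t)-\nabla F(\theta^t)\|^2\le\sigma_G^2$ via Assumption~\ref{bounded local noise}, producing the $6K\sigma_G^2$ contribution and the $30K^2\eta^2\|\nabla F(\theta^t)\|^2$ term. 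I expect the main obstacle to be the constant bookkeeping: choosing the weight $\tfrac{1}{2K-1}$ in the relaxed triangle inequality so the effective growth stays below $1+\tfrac{1}{K-1}$, and then verifying that the geometric sum together with $\eta\le\tfrac{1}{8LK}$ closes exactly into the stated coefficients $5$, $6$, and $30$ rather than merely into \emph{some} constant.
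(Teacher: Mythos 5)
Your proposal follows essentially the same route as the paper's proof: the identical one-step recursion with the four-way decomposition (stochastic noise via the bias--variance split, smoothness difference, heterogeneity, global gradient), the relaxed triangle inequality with parameter $\frac{1}{2K-1}$, the step-size condition $\eta \le \frac{1}{8LK}$ to cap the growth factor at $1+\frac{1}{K-1}$, and the unrolled geometric sum of order $K$ averaged over clients. The only discrepancy is constant bookkeeping: the paper's single Young step followed by Cauchy--Schwarz over the three non-drift pieces gives coefficient $6K$ (not your $2K$) on the heterogeneity and gradient terms, which is precisely where the stated constants $5$, $6$, and $30$ come from --- but as you yourself flag, this affects only the constants, not the validity of the argument.
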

\begin{proof}
    For any worker $i \in[N]$ and $k \in[K]$, we have:
$$
\begin{aligned}
& \mathbb{E}[\|\theta_{i, k}^t-\theta^t\|^2]=\mathbb{E}[\|\theta_{i, k-1}^t-\theta^t-\eta g_{t, k-1}^t\|^2] \\
& \leq \mathbb{E}[\|\theta_{i, k-1}^t-\theta^t-\eta(g_{i, k-1}^t-\nabla F_i(\theta_{i, k-1}^t)
\\&+\nabla F_i(\theta_{i, k-1}^t)-\nabla F_i(\theta^t)+\nabla F_i(\theta^t)-\nabla f(\theta^t)+\nabla f(\theta^t))\|^2] \\
& \leq(1+\frac{1}{2 K-1}) \mathbb{E}[\|\theta_{i, k-1}^t-\theta^t\|^2]+\mathbb{E}[\|\eta(g_{i, k-1}^t-\nabla F_i(\theta_{i, k-1}^t))\|^2] \\
& +6 K \mathbb{E}[\|\eta(\nabla F_i(\theta_{i, k-1}^t)-\nabla F_i(\theta^t))\|^2]+6 K \mathbb{E}[\| \eta(\nabla F_i(\theta^t)-\nabla f(\theta^t))) \|^2]+6 K\|\eta \nabla f(\theta^t)\|^2 \\
& \leq(1+\frac{1}{2 K-1}) \mathbb{E}[\|\theta_{i, k-1}^t-\theta^t\|^2]+\eta^2 \sigma_L^2+6 K \eta^2 L^2 \mathbb{E}[\|\theta_{i, k-1}^t-\theta^t\|^2]
\\&+6 K \eta^2 \sigma_G^2+6 K\|\eta \nabla f(\theta^t)\|^2 \\
& =(1+\frac{1}{2 K-1}+6 K \eta^2 L^2) \mathbb{E}[\|\theta_{i, k-1}^t-\theta^t\|^2]+\eta^2 \sigma_L^2
\\&+6 K \eta^2 \sigma_G^2+6 K\|\eta \nabla f(\theta^t)\|^2 \\
& \leq(1+\frac{1}{K-1}) \mathbb{E}[\|\theta_{i, k-1}^t-\theta^t\|^2]+\eta^2 \sigma_L^2+6 K \eta^2 \sigma_G^2+6 K\|\eta \nabla f(\theta^t)\|^2
\end{aligned}
$$
Unrolling the recursion, we get:
$$
\begin{aligned}
&\frac{1}{m} \sum_{i=1}^m \mathbb{E}[\|\theta_{i, k}^t-\theta^t\|^2] 
\\ &\leq \sum_{p=0}^{k-1}(1+\frac{1}{K-1})^p[\eta^2 \sigma_L^2+6 K\eta^2 \sigma_G^2+6 K \| \eta \nabla f(\theta^t)) \|^2] \\
\\ & \leq(K-1)[(1+\frac{1}{K-1})^K-1][\eta^2 \sigma_L^2+6 K \eta^2\sigma_G^2+6 K  \| \eta \nabla f(\theta^t)) \|^2] \\
& \leq 5 K \eta^2(\sigma_L^2+6 K \eta^2 \sigma_G^2)+30 K^2 \eta^2\|\nabla f(\theta^t)\|^2
\end{aligned}
$$
This completes the proof.
\end{proof}

\convergence*

\begin{proof}
For convenience, we define,
\begin{align*}
\tilde{\Delta}^t \triangleq -\frac{1}{N} \sum_{i = 1}^N \tilde{g}_i^t \\
\bar{\Delta}^t \triangleq -\frac{1}{N} \sum_{i = 1}^N g_i^t
\end{align*}
With Assumption \ref{assumption:L-smoothness},
\begin{align}
\begin{aligned}
\label{L-smooth_expansion}
&E[F(\theta^{t+1})] \leq F({\theta}^t)+\langle\nabla F({\theta}^t), \mathbb{E}[{\theta}^{t+1}-{\theta}^t]\rangle+\frac{L}{2} \mathbb{E}[\|{\theta}^{t+1}-{\theta}^t\|^2]\\
&= F({\theta}^t)+\langle\nabla F({\theta}^t), \mathbb{E}[\eta_g \tilde{\Delta}^t + \eta_g\eta K \nabla F({\theta}^t) - \eta_g\eta K \nabla F({\theta}^t)] \rangle\\
    &+\frac{\eta_g^2L}{2} \mathbb{E}[\|\tilde{\Delta}^t|^2]\\
&=F(\mathbf{\theta}^t)-\eta_g \eta K\|\nabla F(\mathbf{\theta}^t)\|^2 
\\ &+ \eta_g\underbrace{\langle\nabla F(\mathbf{\theta}^t), \mathbb{E}_t[\bar{\Delta}^t+ K \nabla F(\mathbf{\theta}^t)]\rangle}_{A_1}
\\&+\eta_g \underbrace{\langle\nabla F(\mathbf{\theta}^t), \mathbb{E}_t[\tilde{\Delta}^t - \bar{\Delta}^t]\rangle}_{A_2} +\frac{L}{2} \eta_g^2 \underbrace{\mathbb{E}_t[\|\tilde{\Delta}^t\|^2]}_{A_3}
\end{aligned}
\end{align}

For $A_1=\langle\nabla F(\theta^t), \mathbb{E}_t[\bar{\Delta}^t+\eta K \nabla F(\theta^t)]\rangle $,

\begin{align*}
    & A_1=\langle\nabla F(\theta^t), \mathbb{E}_t[-\frac{1}{N} \sum_{i=1}^N \sum_{k=0}^{K-1} \eta \mathbf{g}_{i, k}^t+\eta K \nabla F(\theta^t)]\rangle \\
& =\langle\nabla F(\theta^t), \mathbb{E}_t[-\frac{1}{N} \sum_{i=1}^N \sum_{k=0}^{K-1} \eta \nabla F_i(\theta_{i, k}^k)+\eta K \frac{1}{N} \sum_{i=1}^N \nabla F_i(\theta^t)]\rangle \\
& =\langle\sqrt{\eta K} \nabla F(\theta^t),-\frac{\sqrt{\eta}}{N \sqrt{K}} \mathbb{E}_t \sum_{i=1}^N \sum_{k=0}^{K-1}(\nabla F_i(\theta_{i, k}^k)-\nabla F_i(\theta^t))\rangle \\
& \stackrel{(a 1)}{=} \frac{\eta K}{2}\|\nabla F(\theta^t)\|^2+\frac{\eta}{2 K N^2} \mathbb{E}_t\|\sum_{i=1}^N \sum_{k=0}^{K-1} \nabla F_i(\theta_{i, k}^k)-\nabla F_i(\theta^t)\|^2
\\&-\frac{\eta}{2 K N^2} \mathbb{E}_t\|\sum_{i=1}^N \sum_{k=0}^{K-1} \nabla F_i(\theta_{i, k}^k)\|^2 \\
& \\
& \stackrel{(a 2)}{\leq} \frac{\eta K}{2}\|\nabla F(\theta^t)\|^2+\frac{\eta}{2 N} \sum_{i=1}^N \sum_{k=0}^{K-1} \mathbb{E}_t\|\nabla F_i(\theta_{i, k}^k)-\nabla F_i(\theta^t)\|^2
\\&-\frac{\eta}{2 K N^2} \mathbb{E}_t\|\sum_{i=1}^N \sum_{k=0}^{K-1} \nabla F_i(\theta_{i, k}^k)\|^2 \\
& \\
& \stackrel{(a 3)}{\leq} \frac{\eta K}{2}\|\nabla F(\theta^t)\|^2+\frac{\eta L^2}{2 N} \sum_{i=1}^N \sum_{k=0}^{K-1} \mathbb{E}_t\|\theta_{i, k}^k-\theta^t\|^2
\\&-\frac{\eta}{2 K N^2} \mathbb{E}_t\|\sum_{i=1}^N \sum_{k=0}^{K-1} \nabla F_i(\theta_{i, k}^k)\|^2 \\
    & \stackrel{(a 4)}{\leq} \eta K(\frac{1}{2}+15 K^2 \eta^2 L^2)\|\nabla F(\theta_t)\|^2+\frac{5 K^2 \eta^3 L^2}{2}(\sigma_L^2+6 K \sigma_G^2)
\\&-\frac{\eta}{2 K N^2} \mathbb{E}_t\|\sum_{i=1}^N \sum_{k=0}^{K-1} \nabla F_i(\theta_{i, k}^k)\|^2.
\end{align*}
\\
where 
$(a 1)$ follows from that $\langle\mathbf{x}, \mathbf{y}\rangle=\frac{1}{2}[\|\mathbf{x}\|^2+\|\mathbf{y}\|^2-\|\mathbf{x}-\mathbf{y}\|^2]$ for $\mathbf{x}=\sqrt{\eta K} \nabla F(\theta_t)$ and $\mathbf{y}=-\frac{\sqrt{\eta}}{N \sqrt{K}} \mathbb{E}_t \sum_{i=1}^N \sum_{k=0}^{K-1}(\nabla F_i(\theta_{t, k}^i)-\nabla F_i(\theta_t))$, $(a 2)$ is due to that $\mathbb{E}[\|x_1+\cdots+x_n\|^2] \leq$ $n \mathbb{E}[\|x_1\|^2+\cdots+\|x_n\|^2]$, $(a3)$ is due to Assumption \ref{assumption:L-smoothness} and $(a4)$ follows from Lemma \ref{one round progress}.

The term $A_2$ in (\ref{L-smooth_expansion}) can be bounded as:

For $A_2 = \langle\nabla F(\mathbf{\theta}^t), \mathbb{E}_t[\tilde{\Delta}^t - \bar{\Delta}^t]\rangle$
$$
\begin{aligned}
A_2 &= \langle\nabla F(\mathbf{\theta}^t), \mathbb{E}^t[\tilde{\Delta}^t - \bar{\Delta}^t]\rangle \\ 
&=\langle \sqrt{\frac{K\eta}{2}}\nabla F(\mathbf{\theta}^t), \sqrt{\frac{2}{K\eta}}\mathbb{E}^t[\tilde{\Delta}^t - \bar{\Delta}^t]\rangle \\ 
& \stackrel{(a 5)}{\le} \frac{K\eta}{4} \|\nabla F(\mathbf{\theta}^t)\|^2 + K\eta \sigma_{QP}^2
\end{aligned}
$$
where 
$(a 5)$ follows from that $\langle\mathbf{x}, \mathbf{y}\rangle \le \frac{1}{2}[\|\mathbf{x}\|^2+\|\mathbf{y}\|^2]$ for $\mathbf{x}=\sqrt{\frac{K}{2}}\nabla F(\mathbf{\theta}^t)$ and $\mathbf{y}=\sqrt{\frac{2}{K}}\mathbb{E}^t[\tilde{\Delta}^t - \bar{\Delta}^t]$ and
assumption \ref{bounded QP drift}.
For $A_3 = \mathbb{E}_t[\|\tilde{\Delta}^t\|^2]$,
using lemma \ref{g_QP to g},
$$
\begin{aligned}
    &\mathbb{E}_t[\|\tilde{\Delta}^t\|^2] = \mathbb{E}[\|\frac{1}{N} \sum_{i=1}^N \tilde{\mathbf{g}}_i\|^2] \\ &\leq 2 \eta^2 K^2\sigma_{QP}^2 + 2\eta^2\mathbb{E}\left[\left\|\frac{1}{N} \sum_{i=1}^N \mathbf{g}^i\right\|^2\right]
    \\ &= 2 \eta^2 K^2\sigma_{QP}^2 + 2\mathbb{E}_t[\|\bar{\Delta}^t\|^2].
\end{aligned}
$$
Then,  for $\mathbb{E}_t[\|\bar{\Delta}^t\|^2]$
$$
\begin{aligned}
& \mathbb{E}_t[\|\frac{1}{N} \sum_{i=1}^N \Delta_i^t\|^2]]\\
& = \frac{1}{N^2} \mathbb{E}_t[\|\sum_{i=1}^N \Delta_i^t\|^2] \\
& =\frac{\eta^2}{N^2} \mathbb{E}_t[\|\sum_{i=1}^N \sum_{k=0}^{K-1} \mathbf{g}_{i, k}^t\|^2]] \\
& \stackrel{(a 6)}{=} \frac{\eta^2}{N^2} \mathbb{E}_t[\|\sum_{i=1}^N \sum_{k=0}^{K-1}(\mathbf{g}_{i, k}^t-\nabla F_i(\theta_{i, k}^t))\|^2]
\\ &+\frac{\eta^2}{N^2} \mathbb{E}_t\|\sum_{i=1}^N \sum_{k=0}^{K-1} \nabla F_i(\theta_{i, k}^t)\|^2 \\
& \stackrel{(a 7)}{\leq} \frac{K \eta^2}{N} \sigma_L^2+\frac{\eta^2}{N^2} \mathbb{E}_t\|\sum_{i=1}^N \sum_{k=0}^{K-1} \nabla F_i(\theta_{i, k}^t)\|^2
\end{aligned}
$$
where $(a6)$ follows from the fact that $\left.\mathbb{E}\left[\|\mathbf{x}\|^2\right]=\mathbb{E}\left[\|\mathbf{x}-\mathbb{E}[\mathbf{x}]\|^2\right]+\|\mathbb{E}[\mathbf{x}]\|^2\right]$ and $(a7)$ is due to the bounded variance assumption in Assumption \ref{bounded local noise} and the fact that $\mathbf{g}_{i, k}^t-\nabla F_i(\theta_{i, k}^t)$ are independent with zero mean.

That's to say 
$$
A_3 \le  2 \eta^2 K^2\sigma_{QP}^2 + \frac{2K \eta^2}{N} \sigma_L^2+\frac{2\eta^2}{N^2} \mathbb{E}_t\|\sum_{i=1}^N \sum_{k=0}^{K-1} \nabla F_i(\theta_{i, k}^t)\|^2
$$.

Substituting the inequalities $A_1$, $A_2$, $A_3$ into  inequality (\ref{L-smooth_expansion}). We have:

$$
\begin{aligned}
& \mathbb{E} [F(\theta^{t+1})]-F(\theta^{t}) \\
& =-\eta_g \eta K\|\nabla F(\mathbf{\theta}^t)\|^2 
\\ &+ \eta_g\underbrace{\langle\nabla F(\mathbf{\theta}^t), \mathbb{E}_t[\bar{\Delta}^t+ K \nabla F(\mathbf{\theta}^t)]\rangle}_{A_1}
\\&+\eta_g \underbrace{\langle\nabla F(\mathbf{\theta}^t), \mathbb{E}_t[\tilde{\Delta}^t - \bar{\Delta}^t]\rangle}_{A_2} +\frac{L}{2} \eta_g^2 
\underbrace{\mathbb{E}_t[\|\tilde{\Delta}^t\|^2]}_{A_3}\\
&\le F(\theta_t)-\eta_g \eta K(\frac{1}{4}-15 K^2 \eta^2 L^2)\|\nabla F(\theta_t)\|^2
\\ &+\frac{L K \eta_g^2 \eta^2}{N} \sigma_L^2 + \frac{5 K^2\eta_g \eta^3 L^2}{2}(\sigma_L^2+6 K \sigma_G^2)
\\ &- (\frac{\eta_g\eta}{2 K N^2} -\frac{\eta_g^2\eta^2L}{N^2}) \mathbb{E}_t\|\sum_{i=1}^N \sum_{k=0}^{K-1} \nabla F_i(\theta_{i, k}^t)\|^2\\
&+ K\eta_g\eta \sigma_{QP}^2 + \eta_g^2\eta^2 K^2L\sigma_{QP}^2 \\
&\stackrel{(a8)}{\le} F(\theta_t)-\eta_g \eta K(\frac{1}{4}-15 K^2 \eta^2 L^2)\|\nabla F(\theta_t)\|^2\\ 
&+\frac{L K \eta_g^2 \eta^2}{N} \sigma_L^2 + \frac{5 K^2\eta_g \eta^3 L^2}{2}(\sigma_L^2+6 K \sigma_G^2)\\
&+ K\eta_g\eta \sigma_{QP}^2 + \eta_g^2\eta^2 K^2L\sigma_{QP}^2 \\
&\stackrel{(a9)}{\le} F(\theta_t)-c \eta_g \eta \|\nabla F(\theta_t)\|^2\\ 
&+\frac{L K \eta_g^2 \eta^2}{N} \sigma_L^2 + \frac{5 K^2\eta_g \eta^3 L^2}{2}(\sigma_L^2+6 K \sigma_G^2)\\
&+ K\eta_g\eta \sigma_{QP}^2 + \eta_g^2\eta^2 K^2L\sigma_{QP}^2 \\
\end{aligned}
$$
where $(a8)$ follows from $\frac{\eta_g\eta}{2 K N^2} -\frac{\eta_g^2\eta^2L}{N^2} > 0$ if $\eta_g\eta > \frac{1}{2KL}$. $\frac{1}{4}-15 K^2 \eta^2 L^2 > c > 0$ if $\eta \le \frac{1}{\sqrt{60}KL}$.
\\
Rearranging and summing from $t=0, \cdots, T-1$, we have:
$$
\begin{aligned}
    &c \eta_g \eta K\|\nabla f(\theta^t)\|^2 \le f(\theta^0)  - f(\theta^t)
    \\&+\frac{L K \eta_g^2 \eta^2}{N} \sigma_L^2 + \frac{5 K^2\eta_g \eta^3 L^2}{2}(\sigma_L^2+6 K \sigma_G^2) + K\eta_g\eta \sigma_{QP}^2 + \eta_g^2\eta^2 K^2L\sigma_{QP}^2 \\
\end{aligned}
$$
which implies,
\begin{equation}
\frac{1}{T}\sum_{t = 0}^T \mathbb{E} ||\nabla F(\theta_t)||_2^2 \leq \frac{F(\theta_0) - F^{\star}}{c\eta K T} + \Phi
\end{equation}
where $\Phi = \frac{1}{c}[\frac{L \eta_g \eta}{N} \sigma_L^2 + \frac{5 K \eta^2 L^2}{2}(\sigma_L^2+6 K \sigma_G^2) + \sigma_{QP}^2 + \eta_g\eta KL\sigma_{QP}^2]$. 
\\
This concludes the proof.

\end{proof}


\end{document}